\PassOptionsToPackage{table}{xcolor}
\documentclass[sigconf,balance=false]{acmart}
\usepackage{amsthm}
\usepackage{amsmath}
\usepackage{algorithm}
\usepackage{enumitem}
\usepackage{picinpar}
\usepackage{lineno}
\usepackage{graphicx}
\usepackage[table]{xcolor}
\usepackage{algorithmicx}
\usepackage[noend]{algpseudocode}
\usepackage{subfigure}
\usepackage{multirow}
\usepackage{color}
\usepackage{balance}
\usepackage{enumitem}
\usepackage{hhline}
\usepackage[normalem]{ulem}
\usepackage{booktabs}
\usepackage{wrapfig}
\usepackage{cancel}
\usepackage{hyperref}
\usepackage{makecell}
\usepackage{listings}

\lstdefinestyle{fullprompt}{
  basicstyle=\ttfamily\tiny,
  breaklines=true,
  breakatwhitespace=false,
  columns=fullflexible,
  keepspaces=true,
  showstringspaces=false,
  frame=single,
  framerule=0.4pt,
  rulecolor=\color{black!35},
  captionpos=t
}

\newtheorem{proposition}{Proposition}

\renewcommand{\vec}[1]{\ensuremath{\mathbf{#1}}}

\newcommand{\stitle}[1]{\vspace{1mm} \noindent {\bf #1}}

\newcommand{\model}{TradingMoE}

\newcommand{\best}[1]{\textbf{#1}}
\newcommand{\second}[1]{\underline{#1}}

\newcommand{\eat}[1]{}

\newcommand{\stkout}[1]{\ifmmode\text{\sout{\ensuremath{#1}}}\else\sout{#1}\fi}

\usepackage{amsmath,amsfonts,bm}

\def\eqref#1{equation~\ref{#1}}

\def\1{\bm{1}}

\ifdefined\mathsfit
\else
\DeclareMathAlphabet{\mathsfit}{\encodingdefault}{\sfdefault}{m}{sl}
\SetMathAlphabet{\mathsfit}{bold}{\encodingdefault}{\sfdefault}{bx}{n}
\fi

\usepackage{hyperref}
\usepackage{url}

\title{\model: Routing the Right Experts in Evolving Markets}

\author{Chang Zhou}
\affiliation{%
  \institution{University of Science and Technology of China}
  \city{Hefei}
  \country{China}
}
\email{zhouchang21sy@mail.ustc.edu.cn}

\author{Xingtong Yu}
\authornote{Corresponding authors.}
\affiliation{%
  \institution{The Chinese University of Hong Kong}
  \city{Hong Kong}
  \country{Hong Kong SAR, China}
}
\email{xtyu@se.cuhk.edu.hk}

\author{Minbin Huang}
\affiliation{%
  \institution{The Chinese University of Hong Kong}
  \city{Hong Kong}
  \country{Hong Kong SAR, China}
}
\email{huangminbin@link.cuhk.edu.hk}

\author{Zhennan Wu}
\affiliation{%
  \institution{The University of Tokyo}
  \city{Tokyo}
  \country{Japan}
}
\email{swwzn714@gmail.com}

\author{Yuan Fang}
\affiliation{%
  \institution{Singapore Management University}
  \city{Singapore}
  \country{Singapore}
}
\email{yfang@smu.edu.sg}

\author{Hong Cheng}
\affiliation{%
  \institution{The Chinese University of Hong Kong}
  \city{Hong Kong}
  \country{Hong Kong SAR, China}
}
\email{ hcheng@se.cuhk.edu.hk}

\author{Xinming Zhang}
\authornotemark[1]
\affiliation{%
  \institution{University of Science and Technology of China}
  \city{Hefei}
  \country{China}
}
\email{xinming@ustc.edu.cn}

\renewcommand{\shortauthors}{Zhou et al.}

\begin{document}

\begin{abstract}
Large language models (LLMs) have shown strong potential for financial analysis and trading, but direct trading remains challenging because the predictive capabilities required can vary across assets, decision fields, and market conditions. Existing LLM-based trading systems either coordinate human-defined external experts or adopt conventional internal Mixture-of-Experts (MoE) routers that do not directly evaluate how individual experts contribute to trading decisions.
Moreover, these routers receive no direct signal indicating when an inactive
expert has become more suitable as market conditions change.
We find that native router scores poorly reflect how much individual experts
improve trading decisions, frequently leaving better alternatives unselected.
We further reveal that token-specific expert usefulness exhibits a compact
low-dimensional structure.
Based on these findings, we propose \model{}, a trading-oriented sparse MoE
that augments a frozen dense LLM with lightweight residual experts.
We introduce a Query--Key router that represents the expertise required by
each token under the current market context as a low-dimensional query and
matches it with learnable expert keys.
We further propose a sparse expert selection update mechanism that samples a
few inactive experts during training and estimates whether they should replace
the weakest expert in the current Top-$k$ route.
This mechanism enables the router to update expert selection as market
conditions change while preserving sparse computation.
Experiments against 22 baselines on stock and cryptocurrency markets show that
\model{} improves cumulative return over the best-performing baselines by $30.89\%$ and $30.7\%$, respectively, while exceeding the
corresponding buy-and-hold benchmarks by $37.81\%$ and $80.17\%$ percentage points.
Rolling paper-trading experiments further demonstrate that its advantage
persists under forward-only deployment. Code is available at \url{https://anonymous.4open.science/r/TradingMoE-DC52}.
\end{abstract}
\begin{CCSXML}
<ccs2012>
   <concept>
       <concept_id>10010147.10010257.10010293.10010294</concept_id>
       <concept_desc>Computing methodologies~Neural networks</concept_desc>
       <concept_significance>500</concept_significance>
   </concept>
   <concept>
       <concept_id>10002951.10003227.10003351</concept_id>
       <concept_desc>Information systems~Data mining</concept_desc>
       <concept_significance>300</concept_significance>
   </concept>
   <concept>
       <concept_id>10010405.10010455.10010460</concept_id>
       <concept_desc>Applied computing~Economics</concept_desc>
       <concept_significance>300</concept_significance>
   </concept>
</ccs2012>
\end{CCSXML}

\ccsdesc[500]{Computing methodologies~Neural networks}
\ccsdesc[500]{Information systems~Data mining}
\ccsdesc[500]{Applied computing~Economics}

\keywords{Large language models, ai4trading, mixture-of-experts}

\maketitle
\section{Introduction}\label{sec.intro}

Artificial intelligence has been widely applied to trading, supporting tasks such as return forecasting, asset ranking, and portfolio optimization \citep{feng2019temporal,xu2021hist,li2024master,jiang2017deep,wang2019alphastock,ye2020reinforcement,liu2021finrl,wang2021commission}. More recently, financial large language models (LLMs) have enabled the joint processing of heterogeneous market information \citep{wu2023bloomberggpt,yang2023fingpt}. Existing methods use LLMs either to extract signals and support market analysis
\citep{yang2019leveraging,ding2023integrating,wang2024llmfactor,guo2024finetuning} or to directly generate trading decisions \citep{xiao2024tradingagents,zhang2024finagent,xiong2025flagtrader}.
However, these approaches typically rely on a single dense model or a fixed agentic workflow to process all market states and decision components. Such uniform computation is poorly suited to heterogeneous and non-stationary financial markets, where the predictive capabilities required can vary substantially across assets, decision fields, and market regimes. This limitation motivates methods that can dynamically allocate specialized modeling capacity according to the current trading context.

\begin{figure*}[t]
    \centering
    \includegraphics[width=\linewidth]{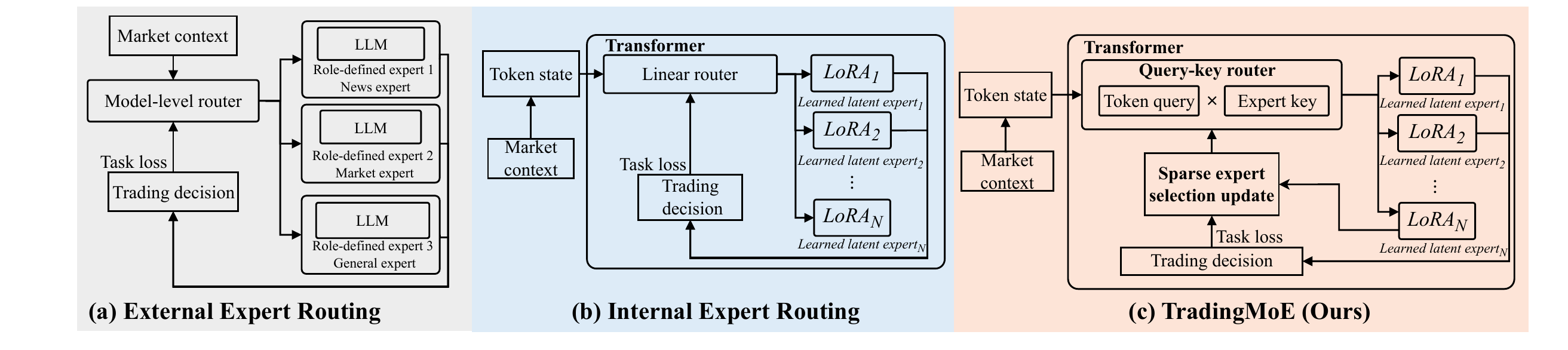}
    \vspace*{-8mm}
    \caption{Comparison of expert routing methods.}
    \label{fig:motivation}
\end{figure*}

Existing LLM-based trading systems commonly realize specialization through external expert routing, as shown in Fig.~\ref{fig:motivation}(a), where an LLM coordinates human-defined external modules, such as news analysts and price predictors \citep{ding2024tradexpert,liu2025llmoe,chen2025mmdrex}. However, such systems rely on predefined expert roles and typically route an entire market state or trading task to a few model-level experts, making it difficult to adapt expert usage across assets, information tokens, and decision fields. 
Internal expert routing, commonly instantiated through sparse \emph{Mixture-of-Experts} (MoE), provides a finer-grained alternative by moving expert routing inside the language model, as shown in Fig.~\ref{fig:motivation}(b), in which a learned router activates only a small subset of experts for each token \citep{shazeer2017outrageously,lepikhin2020gshard,fedus2022switch,jiang2024mixtral}.
However, conventional routers are trained mainly through the task loss and routing regularizers, which do not directly reveal each expert's contribution or how unselected experts would perform. Moreover, each expert's contribution to the trading decision may change as market conditions evolve.
This leaves a central problem unresolved:
\emph{how to evaluate each expert's contribution to trading decisions and track its changes as market conditions evolve?}
This problem is non-trivial due to two key challenges.

First, \emph{how can we quantify each expert's contribution to a trading decision?}  Conventional routers assign expert scores from token hidden states \citep{fedus2022switch,lepikhin2020gshard}.  These scores are only implicit proxies for expert suitability: the training objective does not directly verify whether a higher-scored expert contributes more to reducing the trading-decision loss.  
We therefore assess routing quality through controlled expert replacement. For each held-out decision token, we replace a selected expert with an unselected alternative while keeping the number of active experts unchanged, and measure the resulting change in the trading-decision loss. If router scores faithfully reflect expert contribution, they should rank experts consistently with these replacement gains.
Instead, naive router (refers to conventional MoE models \citep{liu2024deepseekv2}) scores have a Pearson correlation \citep{pearson1895vii} of only $-0.015$ with the measured gains, and $66.76\%$ of decision tokens leave at least one better expert unselected.
This substantial mismatch motivates a measure of expert contribution. We define counterfactual expert credit using a standard first-order approximation \citep{molchanov2019importance,michel2019sixteen}. It estimates how replacing a selected expert with an alternative would change the trading-decision loss; a higher credit means that the alternative is expected to reduce the loss more.  We aggregate these credits into a token--expert matrix and analyze its structure across two pretrained MoE-LLMs \citep{muennighoff2025olmoe,liu2024deepseekv2}. As shown in Table~\ref{tab:intro_lowrank_credit},  the matrices exhibit a pronounced low-rank structure \citep{eckart1936approximation}: rank-16 reconstructions retain $74.2\%$ and $77.9\%$ of the credit energy and identify higher-credit experts than the native routers on over $98\%$ of decision tokens. This finding suggests that expert suitability can be represented in a compact latent space. Motivated by this structure, we propose a \emph{Query--Key} router. For each decision token, the router constructs a low-dimensional trading-demand query that represents the expertise required by the token under the current market context. Each expert is associated with a learnable expert key, and computes routing scores through query--key matching.

\begin{table}[t]
\centering
\caption{Low-rank analysis of token--expert counterfactual credit.}
\vspace*{-10pt}
\label{tab:intro_lowrank_credit}
\small
\resizebox{\linewidth}{!}{
\begin{tabular}{lcccccc}
\toprule
& \multicolumn{3}{c}{Credit Energy Retained} & \multicolumn{3}{c}{Better-than-Native Route Rate} \\
\cmidrule(lr){2-4}\cmidrule(lr){5-7}
MoE backbone & $r=1$ &$r=4$ & $r=16$ & $r=1$ &\hspace{10pt}$r=4$ & $r=16$ \\
\midrule
OLMoE & 0.185 & 0.373 & 0.742 & 0.643& \hspace{10pt}0.812 & 0.983 \\
DeepSeek-V2-Lite & 0.117 & 0.359 & 0.779 & 0.734& \hspace{10pt}0.875 & 0.986 \\
\bottomrule
\end{tabular}
}
\parbox{\linewidth}{\footnotesize $r$ is the reconstruction rank. Better-than-Native Route Rate is the rate of decision tokens for which the experts selected using this compact representation have higher total credit than those selected by the original router.}
\vspace{-20pt}
\end{table}

Second, \emph{how can the router learn when its expert selection should change across market conditions?}
Expert contribution  is inherently market-dependent: an expert that is effective in one regime may become ineffective after a regime shift or event shock \citep{ang2012regime,lo2004adaptive}. As market evolves, the router must therefore learn not only which experts are currently useful, but also when an inactive expert has become a better alternative and should enter the route.
Conventional sparse routing exposes the task loss only for the experts activated at each token, leaving the current contribution of inactive experts unobserved \citep{shazeer2017outrageously,fedus2022switch}.
In this work, we propose a \textit{sparse expert selection update} mechanism. At each decision-value token, we sample a small number of inactive experts as challengers and compare them with the lowest-scored active expert in the current Top-$k$ route. Their first-order relative credits estimate whether a challenger would reduce the trading-decision loss more than the lowest-scored active expert and should therefore replace it.

In summary, our contributions are fourfold.
First, we uncover that native routers often fail to select the experts that best reduce the trading-decision loss, and token--expert counterfactual credits exhibit a pronounced approximately low-rank structure.
Second, we propose \model{}, a trading-oriented sparse internal expert routing MoE that combines query-key router for modeling expert contribution with a sparse expert selection update mechanism for adapting the route as market context changes.
Third, we provide theoretical proof for the proposed expert selection update mechanism, showing that inactive-expert sampling yields an unbiased estimate of the update over all inactive experts and that the routing-margin update is consistent with the local loss reduction induced by expert replacement.
Fourth,  extensive experiments on stock, crypto, and rolling paper-trading  demonstrate substantial improvements of \model{} over 22 baselines.

\section{Related Work}\label{sec.related}

\noindent\textbf{Conventional AI methods for trading.}
Traditional machine-learning methods for trading typically rely on structured market signals, such as price-volume features and technical indicators to predict future returns, movement directions, or stock rankings \citep{feng2019temporal,wang2019alphastock}. 
Deep learning methods further improve trading models by learning temporal patterns, cross-sectional dependencies, and market-aware representations from financial data \citep{xu2021hist,li2024master}. 
Reinforcement learning approaches formulates trading as a sequential decision-making problem and optimize trading actions or portfolio weights under transaction costs and risk-aware objectives \citep{jiang2017deep,wang2019alphastock,ye2020reinforcement,liu2021finrl,wang2021commission}. 
However, these approaches are usually built on fixed numerical inputs and specialized task objectives, limiting their ability to incorporate heterogeneous financial information.

\noindent\textbf{LLMs-based methods for trading.}
Recent studies adapt large language models to financial scenarios, including financial text understanding, document analysis, numerical reasoning and market forecasting\citep{wu2023bloomberggpt,yang2023fingpt,tang2023finentity,mukherjee2022ectsum,chen2021finqa,chen2022convfinqa,zhu2025fincast,yang2019leveraging,ding2023integrating,wang2024llmfactor,guo2024finetuning}. More recent LLM-based trading systems move toward direct decision-making through agentic workflows, where LLMs act as analysts, planners, or policy agents in trading pipelines \citep{xiao2024tradingagents,zhang2024finagent,xiong2025flagtrader}.
However, these systems typically rely on a single dense model or a fixed agentic workflows, providing limited flexibility in how
specialized capabilities are invoked during decision generation.


\noindent\textbf{Expert routing for trading.}
External expert routing decomposes trading analysis and decision-making into
human-defined expert modules and coordinates them at the task or model level
\citep{chen2025mmdrex,ding2024tradexpert,liu2025llmoe}.
While this paradigm integrates heterogeneous expertise, its predefined expert
roles and coarse routing granularity limit adaptation across assets and
decision components.
Internal expert routing instead performs token-level expert selection within
Transformer layers, enabling finer-grained specialization
\citep{shazeer2017outrageously,lepikhin2020gshard,fedus2022switch,jiang2024mixtral}.
However, conventional internal routers score experts from token hidden states
without explicit supervision on whether an unselected expert would reduce the
trading-decision loss.
Our work addresses this gap by designing and supervising internal expert
routing for direct generation of trading decisions.

\section{Preliminaries}\label{sec.preliminary}

\noindent\textbf{Daily trading task formulation.}
We formulate trading task as a decision-generation problem. For each trading day $n$, let $\tau$ denote the predefined decision time.
The model uses only market information available before $\tau$ to generate the trading decision for that day. For notational simplicity, we omit the day index $n$ in the following.
The input is a set of candidate assets $\mathcal{X}=\{x_{1},x_{2},\cdots\}$. Asset $i$ is represented as a structured sequence $x_{i}$ comprising its
sector, market capitalization, five-day open, high, low, close, and volume (OHLCV) \cite{garman1980estimation} history, indicators \cite{wilder1978new}, and timestamped asset-related news. Details of the construction of the input is shown in Appendix~\ref{app:setup-data}. $\mathcal{X}$ is fed into a trading model, which generates a set of trading decisions $\mathcal{Y}=\{y_{1},y_{2},\cdots,y_{M}\}$. For asset $i$, the decision is represented as $y_i=(a_i,q_i),$ where $a_i\in\{\texttt{long},\texttt{short},\texttt{hold}\} $ denotes the trading action, and $q_i\geq 0$ denotes the corresponding position size.
A \texttt{long} action allocates $q_i$ to a long position, whereas a \texttt{short} action allocates $q_i$ to a short position. For a \texttt{hold} action, $q_i$ preserves the current position rather than opening a new position.

\noindent\textbf{Internal expert routing.}
Here we introduce a conventional internal expert routing architecture \cite{shazeer2017outrageously,fedus2022switch}.
Specifically, the market input $\mathcal X$  is first serialized, tokenized, and mapped to an embedding matrix:
\begin{equation}
\mathbf{H}^{0}
=
\mathtt{Embed}
\left(
\mathtt{Tokenizer}
\left(
\mathtt{Serialize}(\mathcal X)
\right)
\right).
\label{eq:input-token-embedding}
\end{equation}
where $\mathtt{Serialize}(\cdot)$ converts the structured multi-asset market
state into the textual sequence. $\mathtt{Tokenizer}(\cdot)$ denotes the tokenizer of a language model, which converts the market input into a sequence of $T$ tokens. $\mathtt{Embed}(\cdot)$ is the token-embedding layer, mapping each token to a $d$-dimensional embedding. $\vec{H}^0$ serves as the input for the transformer. 
Consider a routed Transformer layer $\ell$ with input
$\mathbf{H}^{\ell-1}\in\mathbb{R}^{T\times d}$.
Before expert routing, the attention sub-layer computes
\begin{equation}
\overline{\mathbf{H}}^{\ell}
=
\mathbf{H}^{\ell-1}
+
\mathtt{Attn}^{\ell}
\left(
\mathtt{LN}^{\ell}
\left(\mathbf{H}^{\ell-1}\right)
\right),
\label{eq:pre-routing-representation}
\end{equation}
where $\mathtt{Attn}^{\ell}(\cdot)$ denotes the multi-head self-attention module and $\mathtt{LN}^{\ell}$ denotes the corresponding layer-normalization module. Let $\overline{\mathbf{h}}_{t}^{\ell}$ denote the $t$-th row of $\overline{\mathbf{H}}^{\ell}$.
At layer $\ell$, the routing module contains $E$ lightweight low-rank
experts \cite{hu2022lora,tian2024hydralora}.
All experts share a trainable down-projection
$\mathbf{A}^{\ell}\in\mathbb{R}^{d_{\mathrm{lr}}\times d}$, while each
expert $i$ has an expert-specific up-projection
$\mathbf{B}_{i}^{\ell}\in\mathbb{R}^{d\times d_{\mathrm{lr}}}$.
The output of expert $i$ for token $t$ is defined as
\begin{equation}
e_{i}^{\ell}
\left(
\overline{\mathbf{h}}_{t}^{\ell}
\right)
=
\frac{\lambda_{\mathrm{lr}}}{d_{\mathrm{lr}}}
\mathbf{B}_{i}^{\ell}
\mathbf{A}^{\ell}
\overline{\mathbf{h}}_{t}^{\ell},
\label{eq:low-rank-expert}
\end{equation}
where $d_{\mathrm{lr}}$ denotes the low-rank dimension \cite{hu2022lora} and
$\lambda_{\mathrm{lr}}$ is the corresponding scaling factor \cite{hu2022lora}.
The expert parameters at layer $\ell$ are collected as
\begin{equation}
\Phi^{\ell}
=
\left\{
\mathbf{A}^{\ell},
\mathbf{B}_{1}^{\ell},
\ldots,
\mathbf{B}_{E}^{\ell}
\right\}.
\label{eq:layer-expert-parameters}
\end{equation}
For each token $t$, a layer-specific scoring multilayer perceptron produces routing scores for
all experts:
\begin{equation}
\mathbf{s}_{t}^{\ell}
=
\mathtt{Score}^{\ell}
\left(
\overline{\mathbf{h}}_{t}^{\ell};
\boldsymbol{\theta}^{\ell}
\right)
\in\mathbb{R}^{E},
\label{eq:expert-routing-score}
\end{equation}
where $\mathtt{Score}^{\ell}(\cdot)$ is the scoring network at layer
$\ell$, with parameter $\boldsymbol{\theta}^{\ell}$.
The $i$-th element $s_{t,i}^{\ell}$ of $\mathbf{s}_{t}^{\ell}$ denotes the
routing score assigned to expert $e_i^\ell$.
The router activates the $K$ highest-scoring experts:
\begin{equation}
\mathcal{I}_{t}^{\ell}
=
\mathtt{TopK}
\left(
\mathbf{s}_{t}^{\ell}
\right),
\label{eq:topk-expert-selection}
\end{equation}
where $\mathcal{I}_{t}^{\ell}\subseteq\{1,\ldots,E\}$ is the index set of
selected experts and $k$ is a hyperparameter that controls the routing sparsity.
The routing weights over the selected experts are computed by
\begin{equation}
\boldsymbol{\alpha}_{t}^{\ell}
=
\mathtt{Softmax}
\left(
\left[
s_{t,i}^{\ell}
\right]_{i\in\mathcal{I}_{t}^{\ell}}/\eta
\right),
\label{eq:expert-routing-weight}
\end{equation}
where $\mathtt{Softmax}(\cdot)$ \cite{bridle1990probabilistic} normalizes the routing scores over the selected
Top-$K$ experts, $\eta>0$ is the temperature parameter and $\alpha_{t,i}^{\ell}$ denotes the resulting weight of expert $i$.
The selected expert outputs are aggregated into a sparse expert residual:
\begin{equation}
\mathbf{r}_{t}^{\ell}
=
\sum_{i\in\mathcal{I}_{t}^{\ell}}
\alpha_{t,i}^{\ell}\cdot
e_{i}^{\ell}(\overline{\mathbf{h}}_{t}^{\ell}).
\label{eq:internal-routing-output}
\end{equation}

The expert residual operate in parallel with the original feed-forward network of the Transformer layer.
The output representation of token $t$ is therefore
\begin{equation}
\mathbf{h}_{t}^{\ell}
=
\mathtt{FFN}^{\ell}
\left(
\overline{\mathbf{h}}_{t}^{\ell}
\right)
+
\mathbf{r}_{t}^{\ell},
\label{eq:routed-layer-output}
\end{equation}
where $\mathtt{FFN}^{\ell}(\cdot)$ denotes the original feed-forward network
at layer $\ell$.
The complete output of the routed layer is $\mathbf{H}^{\ell}
=
\left[
\mathbf{h}_{1}^{\ell};
\mathbf{h}_{2}^{\ell};
\ldots;
\mathbf{h}_{T}^{\ell}
\right]
\in\mathbb{R}^{T\times d}.
$
Let $\mathcal{L}$ denote the set of routed Transformer layers.
We collect the expert parameters and scoring-network parameters across all
routed layers as
$
\Phi
=
\left\{
\Phi^{\ell}
\right\}_{\ell\in\mathcal{L}}$ and
$\Theta
=
\left\{
\boldsymbol{\theta}^{\ell}
\right\}_{\ell\in\mathcal{L}},
$ respectively.
The output $\mathbf{H}^{\ell}$ is passed to the subsequent Transformer layer,
and the final hidden representations are decoded to generate
the structured trading decisions $\mathcal{Y}$.


\stitle{Training objective.}
Given a market input $\mathcal X$, the model defines an autoregressive
distribution
$p_{\Phi,\Theta}(\mathcal Y\mid\mathcal X)$
over structured trading decisions, where $\Phi$ and $\Theta$ denote the
trainable expert and router parameters, respectively.
Let $\mathcal Y^{\star}$ denote the corresponding ground-truth trading
decision, constructed as described in
Appendix~\ref{app:ground-truth-construction}.

The training objective minimizes the negative log-likelihood of the
ground-truth decision:
\begin{equation}
\mathcal L_{\mathrm{task}}(\Phi,\Theta)
=
-
\log
p_{\Phi,\Theta}
\left(
\mathcal Y^{\star}
\mid
\mathcal X
\right).
\label{eq:decision-level-loss}
\end{equation}

To compute this objective autoregressively, we serialize and tokenize
$\mathcal Y^{\star}$ into
\begin{equation}
\mathbf z^{\star}
=
\mathtt{Tokenizer}
\left(
\mathtt{Serialize}
\left(
\mathcal Y^{\star}
\right)
\right)
=
\left(
z_1^{\star},
\ldots,
z_{|\mathbf z^{\star}|}^{\star}
\right).
\label{eq:target-token-sequence}
\end{equation}
Under teacher forcing, the preceding ground-truth tokens
$z_{<j}^{\star}
=
(z_1^{\star},\ldots,z_{j-1}^{\star})$
are provided when predicting $z_j^{\star}$.
Accordingly, Eq.~\ref{eq:decision-level-loss} is computed as
\begin{equation}
\mathcal L_{\mathrm{task}}(\Phi,\Theta)
=
-
\sum_{j=1}^{|\mathbf z^{\star}|}
\log
p_{\Phi,\Theta}
\left(
z_j^{\star}
\mid
\mathcal X,
z_{<j}^{\star}
\right).
\label{eq:task-loss}
\end{equation}
Training minimizes this loss with respect to $\Phi$ and $\Theta$, while the
parameters of the pretrained language-model backbone remain frozen.
\section{Proposed Method}\label{sec.method}

\begin{figure*}[t]
\centering
\includegraphics[width=1\textwidth]{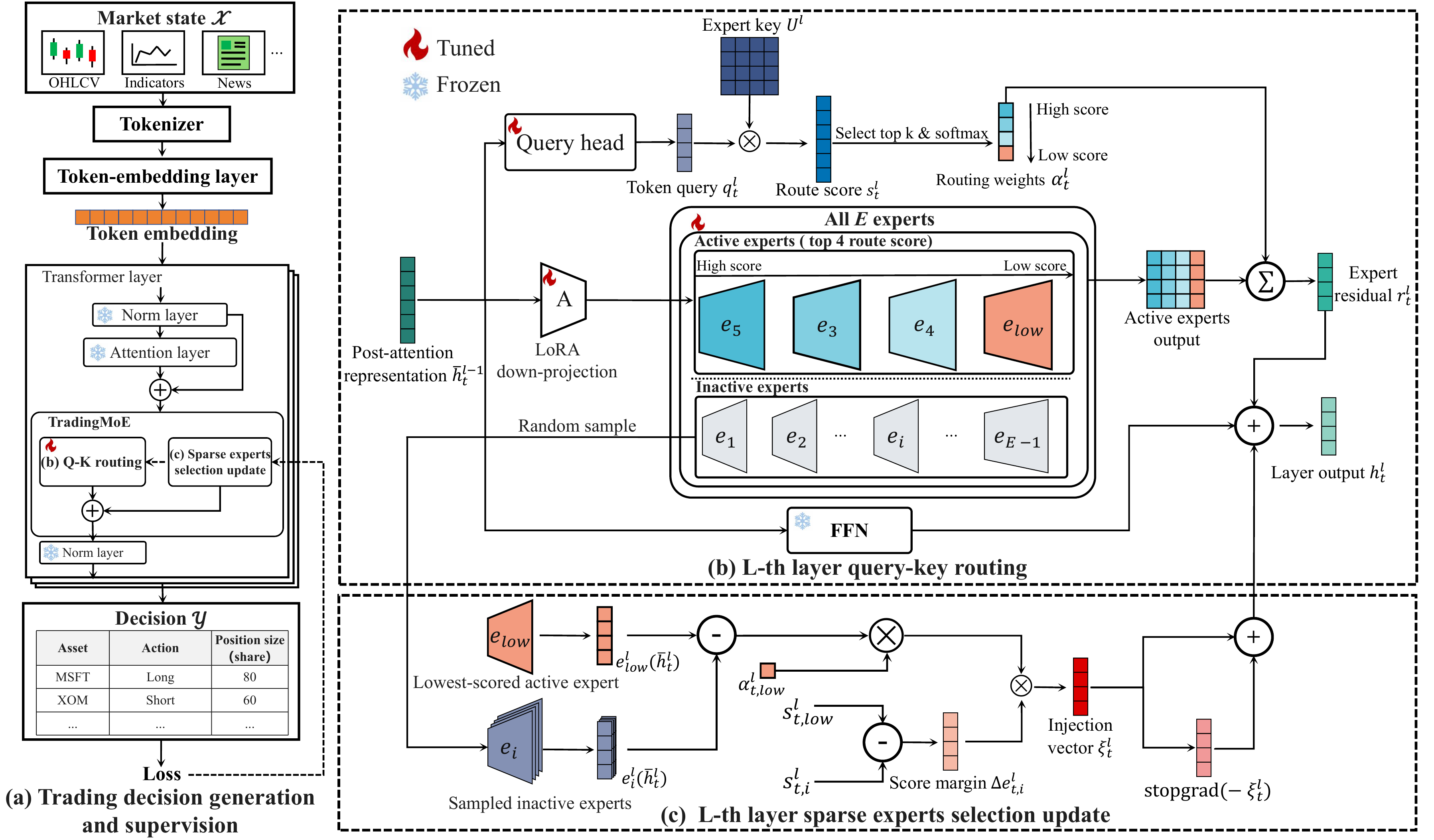}
\vspace{-8mm}
\caption{Overview of \model{}.}
\label{fig:method_framework}
\end{figure*}

\subsection{Overview}\label{sec.method.overview}

We illustrate the overall framework of \model{} in Fig.~\ref{fig:method_framework}. 
As shown in Fig.~\ref{fig:method_framework}(a), the structured market state
is processed by a frozen LLM backbone whose routed Transformer layers are
augmented with token-level sparse residual experts.
The final hidden representations are decoded into structured trading
decisions.
\model{} contains two key components.
The Query--Key router in Fig.~\ref{fig:method_framework}(b) maps each
post-attention token representation to a low-dimensional trading-demand query
and matches it with learnable expert keys to select the Top-$k$ experts.
The sparse expert selection update in Fig.~\ref{fig:method_framework}(c)
samples inactive experts, compares them with the lowest-scored selected
expert, and uses their replacement information to update the router within
the same backward pass.

\subsection{Query--Key Routing}\label{sec.method.qr}
The low-rank structure of token--expert credit suggests that
expert suitability can be captured through low-dimensional interactions
between token requirements and expert representations.
Accordingly, we introduce a Query--Key router to parameterize
token--expert routing scores.

\stitle{Low-rank query--key matching.}
At routed layer $\ell$, the router takes the post-attention representation
$\overline{\mathbf h}_{t}^{\ell}\in\mathbb{R}^{d}$ of token $t$ as input.
A layer-specific query network maps it to a low-dimensional
trading-demand query:
\begin{equation}
\mathbf q_{t}^{\ell}
=
f^{\ell}
\left(
\overline{\mathbf h}_{t}^{\ell};
\boldsymbol{\theta}^{\ell}
\right)
\in\mathbb R^{d_{\mathrm q}},
\label{eq:qk-query}
\end{equation}
where $f^{\ell}$ denotes the query network, implemented as a MLP. $\boldsymbol{\theta}^{\ell}$ denotes its parameters, and
$d_{\mathrm q}$ is the routing dimension.
The query is conditioned on both the current token and the market context
encoded in its hidden representation, and is intended to capture the
expertise required for processing that token.
Each expert $i$ at layer $\ell$ is associated with a learnable key $\mathbf u_{i}^{\ell}\in\mathbb R^{d_\vec{q}}$. The routing score between token $t$ and expert $i$ is computed as
\begin{equation}
s_{t,i}^{\ell}
=
\left(
\mathbf q_{t}^{\ell}
\right)^{\top}
\mathbf u_{i}^{\ell}.
\label{eq:qk-score}
\end{equation}
A higher score indicates a stronger match between the expertise required by the token and the corresponding expert.

To make the low-rank parameterization explicit, let
$\mathbf Q^\ell\in\mathbb R^{T\times d_{\mathrm q}}$ collect the queries of
all tokens and
$\mathbf U^\ell\in\mathbb R^{E\times d_{\mathrm q}}$ collect the keys of all
experts. The complete query--key score matrix is
\begin{equation}
\mathbf S^\ell
=
\mathbf Q^\ell
\left(
\mathbf U^\ell
\right)^\top
\in\mathbb R^{T\times E}.
\label{eq:qk-score-matrix}
\end{equation}
Therefore,
$\operatorname{rank}(\mathbf S^\ell)\leq d_{\mathrm q}$,
providing a compact parameterization consistent with the observed low-rank credit structure.

\stitle{Experts routing and aggregation.}
The router then selects the $K$ highest-scoring experts
$\mathcal I_{t}^{\ell}$ and computes their normalized routing weights
$\boldsymbol{\alpha}_{t}^{\ell}$ following
Eqs.~\ref{eq:topk-expert-selection} and
\ref{eq:expert-routing-weight}, respectively.
Each selected expert applies its low-rank transformation to the
post-attention token representation
$\overline{\mathbf h}_{t}^{\ell}$, as defined in
Eq.~\ref{eq:low-rank-expert}.
The resulting expert outputs are weighted by
$\boldsymbol{\alpha}_{t}^{\ell}$ and aggregated into the sparse residual
$\mathbf r_t^\ell$ according to Eq.~\ref{eq:internal-routing-output}.
This residual is combined with the output of the frozen feed-forward network and the
post-attention representation to produce the final token representation
$\mathbf h_t^\ell$, as defined in Eq.~\ref{eq:routed-layer-output}.
Collecting all token representations yields the routed-layer output
$\mathbf H^\ell$, which is passed to the subsequent Transformer layer.

\subsection{Sparse Expert Selection Update}\label{sec.method.credit}
In naive top-$k$ routing, the task loss backpropagates only through the selected expert outputs and their routing scores. Inactive experts therefore receive no task-dependent signal indicating whether their routing scores should be increased. To address this issue, we propose a sparse expert selection update mechanism.

\stitle{Detached routing margin.}
We first randomly sample $m$ candidates from inactive experts, denote their index set by $\hat{\mathcal{I}}_t^\ell$.
Then we define the lowest-scored expert in the current top-$k$ route as $e_{low}^\ell(\cdot)$.
For each sampled inactive expert
$i\in\hat{\mathcal I}_t^\ell$, we compare its routing score with that of
$e_{low}^\ell(\cdot)$.
To prevent the resulting supervision from propagating to the attention
sub-layer, we compute the score margin using a detached token representation:
\begin{equation}
\Delta{s}_{t,i}^{\ell}
=
f^{\ell}
\left(
\mathtt{stopgrad}
\left(
\overline{\mathbf h}_{t}^{\ell}
\right);
\boldsymbol{\theta}^{\ell}
\right)^{\top}
\left(
\mathbf u_{i}^{\ell}
-
\mathbf u_{low}^{\ell}
\right),
\label{eq:detached-router-score}
\end{equation}
where $\mathtt{stopgrad}(\cdot)$ preserves its input during the
forward pass but stops gradient propagation during backpropagation. $\mathbf u_{low}^{\ell}$ denotes the key of
$e_{low}^{\ell}(\cdot)$.
A positive $\Delta {s}_{t,i}^{\ell}$ indicates that the sampled inactive expert receives a higher routing score than the lowest-scored active expert.

\noindent\textbf{Same-step router update.}
We combine the detached routing margins with the corresponding expert-output differences:
\begin{equation}
\boldsymbol{\xi}_{t}^{\ell}
=
\frac{\lambda_{\mathrm{sel}}}{m}
\sum_{i\in\hat{\mathcal I}_{t}^{\ell}}
\Delta s_{t,i}^{\ell}
\mathtt{stopgrad}(\alpha_{t,\mathrm{low}}^{\ell}
\left[
e_i^\ell
\left(
\overline{\mathbf h}_{t}^{\ell}
\right)
-
e_{low}^\ell
\left(
\overline{\mathbf h}_{t}^{\ell}
\right)
\right]),
\label{eq:credit-injection}
\end{equation}
where $\lambda_{\mathrm{sel}}>0$ controls the strength of the expert selection update.
We inject this information into the routed-layer output through
\begin{equation}
\mathbf h_{t}^{\ell}
=
\mathtt{FFN}^{\ell}
\left(
\overline{\mathbf h}_{t}^{\ell}
\right)
+
\mathbf r_{t}^{\ell}
+
\boldsymbol{\xi}_{t}^{\ell}
-
\mathtt{stopgrad}
\left(
\boldsymbol{\xi}_{t}^{\ell}
\right).
\label{eq:zero-forward-injection}
\end{equation}
As Eqs.~\ref{eq:credit-injection} and \ref{eq:zero-forward-injection} cancel in the forward pass, the model prediction and the task loss in Eq.~\ref{eq:task-loss} remain unchanged.
During backpropagation, however, gradients through $\boldsymbol{\xi}_{t}^{\ell}$ update the query-network parameters $\Theta$ and expert-key matrices $\mathbf U=\{\mathbf U^\ell\}$, including the keys of sampled inactive experts. Meanwhile, the selected expert parameters $\Phi$ are updated through the
original routed branch. Training therefore minimizes
$\mathcal L_{\mathrm{task}}(\Phi,\Theta,\mathbf U)$ with respect to $\Phi$, $\Theta$, and $\mathbf U$, while keeping the pretrained backbone frozen.

\subsection{Theoretical Justification}
We theoretically prove two properties of the sparse expert selection mechanism.
First, sampling a small number of inactive experts provides an unbiased estimate of the update averaged over all inactive experts. Second, the router update adjusts the routing margin according to
whether replacing the lowest-scored selected expert with a sampled inactive expert would locally reduce the task loss.

\begin{proposition}[Unbiased inactive-expert sampling]
\label{prop:inactive-unbiased}
Let
$\boldsymbol{\psi}^{\ell}
=
\{\boldsymbol{\theta}^{\ell},\mathbf U^{\ell}\}$
denote the router parameters at layer $\ell$, and let
$\mathbf v_{t,i}^{\ell}$ denote the gradient contribution to
$\boldsymbol{\psi}^{\ell}$ obtained by evaluating inactive expert $i$.
If $\hat{\mathcal I}_t^\ell$ contains $m$ experts sampled uniformly without
replacement from $(\mathcal I_t^\ell)^c$, then
\begin{equation}
\mathbb E_{\hat{\mathcal I}_t^\ell}
\left[
\frac{1}{m}
\sum_{i\in\hat{\mathcal I}_t^\ell}
\mathbf v_{t,i}^{\ell}
\right]
=
\frac{1}{E-k}
\sum_{i\in(\mathcal I_t^\ell)^c}
\mathbf v_{t,i}^{\ell}.
\label{eq:inactive-unbiased}
\end{equation}
Therefore, the sampled router-update gradient is an unbiased estimate of the
average gradient over all inactive experts.
\end{proposition}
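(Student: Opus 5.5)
The plan is to rewrite the sampled average with indicator variables and use linearity of expectation. The only randomness is the choice of $\hat{\mathcal I}_t^\ell$. So I first condition on everything else: the token representation $\overline{\mathbf h}_t^\ell$, the active set $\mathcal I_t^\ell$, the lowest-scored active expert $e_{low}^\ell$, and the current parameters. Under this conditioning each $\mathbf v_{t,i}^{\ell}$ is a fixed vector for every $i\in(\mathcal I_t^\ell)^c$.

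This needs justification. From Eqs.~\ref{eq:detached-router-score}--\ref{eq:zero-forward-injection}, the gradient of $\boldsymbol{\xi}_t^\ell$ with respect to $\boldsymbol{\psi}^\ell$ is a sum over $i\in\hat{\mathcal I}_t^\ell$ with the common prefactor $\lambda_{\mathrm{sel}}/m$. The $i$-th summand depends only on $\Delta s_{t,i}^\ell$ and the detached difference $\alpha_{t,\mathrm{low}}^{\ell}[e_i^\ell-e_{low}^\ell]$. Because the injection is zero in the forward pass, the upstream gradient $\partial\mathcal L_{\mathrm{task}}/\partial\mathbf h_t^\ell$ does not depend on which experts were sampled. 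Hence I can define $\mathbf v_{t,i}^\ell := \lambda_{\mathrm{sel}}\,(\partial \Delta s_{t,i}^\ell/\partial\boldsymbol{\psi}^\ell)^{\top}\,\mathrm{stopgrad}(\alpha_{t,\mathrm{low}}^\ell[e_i^\ell-e_{low}^\ell])^{\top}\partial\mathcal L_{\mathrm{task}}/\partial\mathbf h_t^\ell$ for every inactive $i$, whether or not it is sampled. The total router gradient from the injection is then exactly $\frac1m\sum_{i\in\hat{\mathcal I}_t^\ell}\mathbf v_{t,i}^\ell$. This sample-independence of each $\mathbf v_{t,i}^\ell$ is the main point needing care. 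Once it holds, the rest is combinatorics.

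Next I write
\[
\frac{1}{m}\sum_{i\in\hat{\mathcal I}_t^\ell}\mathbf v_{t,i}^\ell=\frac1m\sum_{i\in(\mathcal I_t^\ell)^c}\mathbf 1[i\in\hat{\mathcal I}_t^\ell]\,\mathbf v_{t,i}^\ell ,
\]
and take expectations term by term. It remains to compute $\Pr[i\in\hat{\mathcal I}_t^\ell]$ for uniform sampling without replacement of $m$ items from the $E-k$ inactive experts, assuming $1\le m\le E-k$. The subsets containing a fixed $i$ number $\binom{E-k-1}{m-1}$ out of $\binom{E-k}{m}$ total, so the probability is $m/(E-k)$. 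The same value also follows by symmetry: all inactive indices have equal inclusion probability, and these probabilities sum to $\mathbb E|\hat{\mathcal I}_t^\ell|=m$.

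Substituting this probability cancels the factor $1/m$ and gives $\frac{1}{E-k}\sum_{i\in(\mathcal I_t^\ell)^c}\mathbf v_{t,i}^\ell$, which is Eq.~\ref{eq:inactive-unbiased}. The unconditional statement then follows from the tower property, averaging over the remaining randomness such as the mini-batch. Summing over tokens and layers by linearity extends the unbiasedness to the full router-update gradient.
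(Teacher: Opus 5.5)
Your proposal is correct and follows essentially the same route as the paper's proof. Both write the sampled average with indicators $\mathbf 1[i\in\hat{\mathcal I}_t^\ell]$, use the inclusion probability $m/(E-k)$, and apply linearity of expectation. Your added argument that each $\mathbf v_{t,i}^\ell$ does not depend on the sample (because the zero-forward injection and its stop-gradients leave $\partial\mathcal L_{\mathrm{task}}/\partial\mathbf h_t^\ell$ unchanged by the draw) justifies a point the paper simply assumes when it calls $\mathbf v_{t,i}^\ell$ a fixed contribution.
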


Sampling reduces the additional expert computation from
$O(E-k)$ to $O(m)$.
The proof is provided in
Appendix~\ref{app:inactive-sampling-proof}.

\begin{proposition}[Replacement-consistent routing-margin update]
\label{prop:routing-margin-update}
For each sampled inactive expert
$i\in\hat{\mathcal I}_t^\ell$, define the expert-output difference used in
Eq.~\ref{eq:credit-injection} as
\begin{equation}
\Delta\mathbf e_{t,i}^{\ell}
=
\alpha_{t,\mathrm{low}}^{\ell}
\left[
e_i^\ell
\left(
\overline{\mathbf h}_t^\ell
\right)
-
e_{\mathrm{low}}^\ell
\left(
\overline{\mathbf h}_t^\ell
\right)
\right].
\label{eq:theoretical-output-difference}
\end{equation}
The same-step router update induces
\begin{equation}
\left.
\frac{\partial\mathcal L_{\mathrm{task}}}
{\partial\Delta s_{t,i}^{\ell}}
\right|_{\mathrm{sel}}
=
\frac{\lambda_{\mathrm{sel}}}{m}
\left\langle
\frac{\partial\mathcal L_{\mathrm{task}}}
{\partial\mathbf h_t^\ell},
\Delta\mathbf e_{t,i}^{\ell}
\right\rangle.
\label{eq:routing-margin-gradient}
\end{equation}

Let $G_{t,i}^{\ell}(\rho)$ denote the reduction in task loss obtained by
perturbing the routed-layer output along
$\Delta\mathbf e_{t,i}^{\ell}$:
\begin{equation}
G_{t,i}^{\ell}(\rho)
=
\mathcal L_{\mathrm{task}}
\left(
\mathbf h_t^\ell
\right)
-
\mathcal L_{\mathrm{task}}
\left(
\mathbf h_t^\ell
+
\rho\Delta\mathbf e_{t,i}^{\ell}
\right).
\label{eq:replacement-loss-reduction}
\end{equation}
If the task loss is twice differentiable in a local neighborhood, then
\begin{equation}
G_{t,i}^{\ell}(\rho)
=
-
\rho
\left\langle
\frac{\partial\mathcal L_{\mathrm{task}}}
{\partial\mathbf h_t^\ell},
\Delta\mathbf e_{t,i}^{\ell}
\right\rangle
+
O
\left(
\rho^2
\left\|
\Delta\mathbf e_{t,i}^{\ell}
\right\|^2
\right).
\label{eq:replacement-first-order}
\end{equation}
Consequently, if replacing
$e_{\mathrm{low}}^\ell(\cdot)$ with $e_i^\ell(\cdot)$ is locally expected
to reduce the task loss, gradient descent increases
$\Delta s_{t,i}^{\ell}$; otherwise, it decreases the routing margin.
\end{proposition}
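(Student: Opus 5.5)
The proof has three parts: a chain-rule computation for Eq.~\ref{eq:routing-margin-gradient}, a Taylor expansion for Eq.~\ref{eq:replacement-first-order}, and a sign comparison that links the two.

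\textbf{Gradient with respect to the margin.} The forward pass of Eq.~\ref{eq:zero-forward-injection} does not depend on $\boldsymbol{\xi}_t^\ell$, because $\boldsymbol{\xi}_t^\ell-\mathtt{stopgrad}(\boldsymbol{\xi}_t^\ell)=\mathbf 0$. In the backward pass, however, $\partial\mathbf h_t^\ell/\partial\boldsymbol{\xi}_t^\ell=\mathbf I$. The injected branch therefore receives the upstream gradient $\partial\mathcal L_{\mathrm{task}}/\partial\mathbf h_t^\ell$ unchanged.

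In Eq.~\ref{eq:credit-injection}, the factor $\Delta\mathbf e_{t,i}^\ell$ sits inside a $\mathtt{stopgrad}$ and is therefore a constant. Hence $\boldsymbol{\xi}_t^\ell$ is linear in the margins, with $\partial\boldsymbol{\xi}_t^\ell/\partial\Delta s_{t,i}^\ell=(\lambda_{\mathrm{sel}}/m)\,\Delta\mathbf e_{t,i}^\ell$.

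Applying the chain rule restricted to this branch (the meaning of $|_{\mathrm{sel}}$) gives Eq.~\ref{eq:routing-margin-gradient} directly. I will also note how this gradient reaches $\boldsymbol{\theta}^\ell$ and $\mathbf U^\ell$ through Eq.~\ref{eq:detached-router-score}. Because of the detached input, no signal flows into $\overline{\mathbf h}_t^\ell$ along this path.

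\textbf{First-order expansion.} Fix all other quantities and define $g(\rho)=\mathcal L_{\mathrm{task}}(\mathbf h_t^\ell+\rho\Delta\mathbf e_{t,i}^\ell)$. The loss is $C^2$ near $\mathbf h_t^\ell$, so Taylor's theorem with Lagrange remainder gives
\[
g(\rho)=g(0)+\rho\left\langle\nabla\mathcal L_{\mathrm{task}},\Delta\mathbf e_{t,i}^\ell\right\rangle+\tfrac{\rho^2}{2}(\Delta\mathbf e_{t,i}^\ell)^\top\nabla^2\mathcal L_{\mathrm{task}}(\tilde{\mathbf h})\,\Delta\mathbf e_{t,i}^\ell .
\]
The Hessian is bounded on a compact neighborhood, so the remainder is $O(\rho^2\|\Delta\mathbf e_{t,i}^\ell\|^2)$. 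Since $G_{t,i}^\ell(\rho)=g(0)-g(\rho)$, this yields Eq.~\ref{eq:replacement-first-order}.

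I will also point out that $\mathbf h_t^\ell+\Delta\mathbf e_{t,i}^\ell$ is exactly the layer output obtained when expert $e_{\mathrm{low}}^\ell$ is swapped for $e_i^\ell$ at the same weight $\alpha_{t,\mathrm{low}}^\ell$. This is why $\rho$-perturbation along $\Delta\mathbf e_{t,i}^\ell$ models the replacement.

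\textbf{Sign comparison.} Write $c=\langle\partial\mathcal L_{\mathrm{task}}/\partial\mathbf h_t^\ell,\Delta\mathbf e_{t,i}^\ell\rangle$. "Locally expected to reduce the loss" means that the first-order term of $G_{t,i}^\ell(\rho)$ is positive for small $\rho>0$, i.e.\ $c<0$. By Eq.~\ref{eq:routing-margin-gradient}, the margin then receives gradient $(\lambda_{\mathrm{sel}}/m)c<0$, so a gradient-descent step moves $\Delta s_{t,i}^\ell$ in the direction $-(\lambda_{\mathrm{sel}}/m)c>0$ and the margin increases. The case $c>0$ is symmetric and decreases the margin.

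The main obstacle is making the last step precise. Gradient descent acts on the parameters $\boldsymbol{\psi}^\ell$, not directly on $\Delta s_{t,i}^\ell$. I would compute the first-order change of $\Delta s_{t,i}^\ell$ under the step $-\eta\,\nabla_{\boldsymbol{\psi}}\mathcal L|_{\mathrm{sel}}$. When only this single sampled term is considered, that change equals $-\eta(\lambda_{\mathrm{sel}}/m)\,c\,\|\nabla_{\boldsymbol{\psi}}\Delta s_{t,i}^\ell\|^2$, which has the claimed sign. With several samples, cross terms between different margins appear. For these I would state the claim per term, as the coordinate-wise gradient on the margin, which is the sense in which the proposition is phrased.
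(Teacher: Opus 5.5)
Your proposal is correct and follows the paper's proof essentially step for step: differentiating the zero-forward injection with the detached $\Delta\mathbf e_{t,i}^\ell$ treated as a constant gives Eq.~\ref{eq:routing-margin-gradient}, a first-order Taylor expansion of $G_{t,i}^\ell(\rho)$ gives Eq.~\ref{eq:replacement-first-order}, and a sign comparison finishes the argument. Your extra remarks are refinements the paper leaves implicit. First, $\mathbf h_t^\ell+\Delta\mathbf e_{t,i}^\ell$ is exactly the layer output with $e_{\mathrm{low}}^\ell$ swapped for $e_i^\ell$ at the fixed weight $\alpha_{t,\mathrm{low}}^\ell$. Second, a single-term parameter step moves the margin by a positive multiple of $-c\,\|\nabla_{\boldsymbol{\psi}}\Delta s_{t,i}^\ell\|^2$, with your $c$; the paper simply treats $\Delta s_{t,i}^\ell$ as if gradient descent acted on it directly.
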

Note that the gradient in Eq.~\ref{eq:routing-margin-gradient} is proportional to
the first-order counterfactual expert credit used in the diagnostic analysis
of Sect.~\ref{sec.intro}.
Thus, the same-step update implicitly converts this credit into supervision
on the routing margin without introducing an additional credit loss.

The proof is provided in
Appendix~\ref{app:relative-credit-validity-proof}.

\section{Experiments}\label{sec.exp}

In this section, we evaluate \model{} on stock and cryptocurrency markets and further assess its prospective performance through rolling paper trading.

\begin{table*}[!t]
\centering
\caption{Portfolio performance on Stock.}
\label{tab:stock_main}
\resizebox{0.95\textwidth}{!}{
\begin{tabular}{llrrrrrrr}
\toprule
Category & Model & Cum. Ret.$\uparrow$ & Ann. Ret.$\uparrow$ & Sharpe$\uparrow$ & Max DD$\downarrow$ & Vol.$\downarrow$ & Hit Rate$\uparrow$ & Turnover \\
\midrule

\multirow{2}{*}{Passive}
& Market Buy \& Hold
& +11.27\% & +21.53\% & 1.331 & 12.27\% & 15.55\% & \best{57.97\%} & 0.000 \\
& Equal-Weight Buy \& Hold
& +2.89\% & +5.34\% & 0.530 & 10.47\% & 10.94\% & 57.25\% & 0.000 \\
\midrule

\multirow[t]{2}{*}{Tree forecasting}
& LightGBM
& \second{+18.19\%}
& \second{+35.70\%}
& \second{1.723}
& 11.09\%
& 18.74\%
& 54.16\%
& 1.462 \\
& XGBoost
& +10.87\%
& +20.73\%
& 1.280
& 8.51\%
& 15.67\%
& 51.82\%
& 1.442 \\
\midrule

\multirow[t]{5}{*}{Neural forecasting}
& LSTM
& +8.53\% & +16.13\% & 1.007 & 7.44\% & 16.14\% & 52.17\% & 1.042 \\
& GRU
& +7.54\% & +14.19\% & 0.866 & 9.70\% & 16.98\% & 52.17\% & 0.975 \\
& MLP
& +8.07\% & +15.23\% & 0.895 & 12.25\% & 17.56\% & 50.72\% & 1.063 \\
& TRA
& +6.56\% & +12.30\% & 0.737 & 10.20\% & 17.89\% & 52.90\% & 0.963 \\
& HIST
& +9.44\% & +17.91\% & 1.050 & 8.19\% & 17.06\% & 50.72\% & 1.101 \\
\midrule

\multirow[t]{3}{*}{Reinforcement learning}
& FinRL-DQN
& +5.93\% & +11.09\% & 0.840 & 6.32\% & 13.62\% & 51.52\% & 0.445 \\
& FinRL-A2C
& +3.37\% & +6.24\% & 0.560 & 7.20\% & 12.11\% & 49.20\% & 0.274 \\
& FinRL-PPO
& -12.83\% & -22.18\% & -1.456 & 14.16\% & 16.31\% & 46.73\% & 0.410 \\
\midrule

\multirow[t]{3}{*}{Financial LLM}
& FinGPT
& -8.62\% & -15.17\% & -1.457 & 13.93\% & \second{10.88\%} & 45.65\% & 0.014 \\
& FinCast
& -11.73\% & -20.38\% & -1.133 & 16.13\% & 18.57\% & 47.83\% & 1.342 \\
& Kronos
& +2.56\% & +4.73\% & 0.376 & 15.47\% & 15.50\% & 50.72\% & 1.794 \\
\midrule

\multirow[t]{3}{*}{General LLM}
& DeepSeek V4 Pro
& +12.84\% & +24.69\% & 1.493 & 7.01\% & 15.65\% & 53.07\% & 1.050 \\
& DeepSeek V4 Flash
& +7.94\% & +14.98\% & 1.092 & \second{4.67\%} & 13.69\% & 52.77\% & 1.100 \\
& Qwen3.6-35B-A3B
& +6.82\% & +12.80\% & 0.868 & 8.99\% & 15.28\% & 52.98\% & 1.020 \\
\midrule

\multirow[t]{2}{*}{LLM trading agents}
& FinAgent
& +10.21\% & +19.43\% & 1.562 & \best{4.57\%} & 11.81\% & 53.31\% & 1.127 \\
& Trading Agent
& +10.42\% & +19.85\% & 1.353 & 6.07\% & 14.12\% & 51.75\% & 1.214 \\
\midrule

\multirow[t]{4}{*}{External expert-routing}
& LLMoE
& -14.45\% & -24.80\% & -2.285 & 15.99\% & 12.14\% & 47.83\% & 1.306 \\
& FLAG-Trader
& -6.12\% & -10.89\% & -0.624 & 9.32\% & 16.33\% & 43.48\% & 0.794 \\
& TradExpert
& -4.74\% & -8.49\% & -0.899 & 6.03\% & \best{9.38\%} & 49.28\% & 0.444 \\
& MM-DREX
& -2.45\% & -4.43\% & -0.247 & 9.82\% & 14.21\% & 52.17\% & 0.914 \\
\midrule

Ours
& \best{\model{}}
& \best{+49.08\%}
& \best{+107.33\%}
& \best{5.091}
& 5.96\%
& 14.55\%
& \second{57.91\%}
& 1.298 \\
\bottomrule
\end{tabular}
}
\vspace{1mm}
\begin{minipage}{\textwidth}
\footnotesize
The best result is shown in \best{bold}, and the second-best is
\second{underlined}.
\end{minipage}
\end{table*}

\begin{table*}[!t]
\centering
\caption{Portfolio performance on Crypto.}
\label{tab:crypto_main}
\resizebox{0.95\textwidth}{!}{
\begin{tabular}{llrrrrrrr}
\toprule
Category & Model & Cum. Ret.$\uparrow$ & Ann. Ret.$\uparrow$ & Sharpe$\uparrow$ & Max DD$\downarrow$ & Vol.$\downarrow$ & Hit Rate$\uparrow$ & Turnover \\
\midrule

\multirow{2}{*}{Passive}
& Market Buy \& Hold
& -6.38\% & -6.38\% & 0.050 & 32.03\% & 41.72\% & 50.14\% & 0.000 \\
& Equal-Weight Buy \& Hold
& -15.76\% & -15.76\% & 0.055 & 44.97\% & 64.30\% & \second{52.33\%} & 0.000 \\
\midrule

\multirow[t]{2}{*}{Tree forecasting}
& LightGBM
& -2.29\% & -2.29\% & 0.291 & 46.41\% & 66.54\% & 51.29\% & 1.398 \\
& XGBoost
& -0.42\% & -0.42\% & 0.306 & 54.06\% & 62.28\% & 50.58\% & 1.369 \\
\midrule

\multirow[t]{5}{*}{Neural forecasting}
& LSTM
& -24.22\% & -24.22\% & -0.083 & 45.39\% & 67.36\% & 48.30\% & 0.804 \\
& GRU
& +36.08\% & +36.08\% & 0.758 & 37.39\% & 77.24\% & 48.79\% & 0.822 \\
& MLP
& -33.09\% & -33.09\% & -0.317 & 45.81\% & 63.20\% & 49.01\% & 0.738 \\
& TRA
& \second{+43.01\%}
& \second{+43.01\%}
& \second{0.838}
& 39.00\%
& 72.00\%
& 49.07\%
& 0.802 \\
& HIST
& -44.61\% & -44.61\% & -0.551 & 59.39\% & 66.59\% & 50.55\% & 1.191 \\
\midrule

\multirow[t]{3}{*}{RL}
& FinRL-DQN
& -11.47\% & -11.47\% & 0.044 & 36.08\% & 53.70\% & 48.57\% & 0.588 \\
& FinRL-A2C
& -38.91\% & -38.91\% & -1.293 & 45.39\% & \best{33.40\%} & 50.19\% & 0.096 \\
& FinRL-PPO
& -30.39\% & -30.39\% & -0.452 & 40.74\% & 51.44\% & 47.89\% & 0.217 \\
\midrule

\multirow[t]{3}{*}{Financial LMs}
& FinGPT
& +17.75\% & +17.75\% & 0.570 & 41.15\% & 64.88\% & 48.55\% & 0.007 \\
& FinCast
& -34.65\% & -34.65\% & -0.324 & 58.85\% & 65.24\% & 47.40\% & 1.159 \\
& Kronos
& -62.67\% & -62.67\% & -1.487 & 63.75\% & 55.71\% & 49.37\% & 1.500 \\
\midrule

\multirow[t]{3}{*}{General LLMs}
& DeepSeek V4 Pro
& -19.68\% & -19.68\% & -0.065 & 50.16\% & 59.72\% & 48.75\% & 0.832 \\
& DeepSeek V4 Flash
& -30.61\% & -30.61\% & -0.287 & 48.15\% & 61.26\% & 48.07\% & 0.667 \\
& Qwen3.6-35B-A3B
& -58.43\% & -58.43\% & -1.299 & 64.32\% & 55.42\% & 46.32\% & 0.910 \\
\midrule

\multirow[t]{2}{*}{LLM trading agents}
& FinAgent
& -50.56\% & -50.56\% & -0.944 & 58.61\% & 57.11\% & 47.61\% & 1.159 \\
& Trading Agent
& +22.15\% & +22.15\% & 0.638 & 44.15\% & 58.46\% & 50.23\% & 1.195 \\
\midrule

\multirow[t]{4}{*}{External expert-routing}
& LLMoE
& +18.48\% & +18.48\% & 0.625 & \second{31.18\%} & \second{39.60\%} & \best{52.93\%} & 0.434 \\
& FLAG-Trader
& -70.41\% & -70.41\% & -1.043 & 78.64\% & 73.89\% & 52.06\% & 0.504 \\
& TradExpert
& +15.55\% & +15.55\% & 0.537 & 36.96\% & 56.84\% & 49.59\% & 0.409 \\
& MM-DREX
& +5.66\% & +5.66\% & 0.366 & 33.04\% & 51.71\% & \second{52.33\%} & 0.331 \\
\midrule

Ours
& \best{\model{}}
& \best{+73.79\%}
& \best{+73.79\%}
& \best{1.355}
& \best{30.14\%}
& 49.65\%
& 51.36\%
& 1.074 \\
\bottomrule
\end{tabular}
}
\vspace{1mm}
\end{table*}

\subsection{Experimental Setup}\label{sec.exp.setup}

\noindent\textbf{Datasets.}
We evaluate \model{} on two multi-asset daily trading benchmarks.
\emph{Stock} is derived from FNSPID \citep{dong2024fnspid}, a large-scale dataset containing 29.7 million stock-price records and 15.7 million time-aligned financial news articles.
We select 33 U.S. equities spanning 11 sectors and use data from 2021-01-01 to 2023-12-31, which are chronologically split into training, validation, and test sets at a ratio of 7:1:2, thereby preserving temporal order.
\emph{Crypto} is a self-collected benchmark covering a 10 cryptocurrencies.
Its evaluation period runs from 2025-01-01 to 2025-12-31, encompassing both bullish and bearish market phases.
Details of the datasets is provided in Appendix~\ref{app:setup-data}.

\noindent\textbf{Baselines.}
We compare \model{} with 22 baselines from seven families, together with two passive buy and hold benchmarks.
(1) \emph{Tree-based forecasting methods}, including LightGBM\citep{ke2017lightgbm} and XGBo-\\ost\citep{chen2016xgboost}, construct additive ensembles of decision trees that
progressively correct residual errors and capture nonlinear interactions among structured market features.
(2) \emph{Neural forecasting methods}, including LSTM \citep{hochreiter1997long}, GRU \citep{chung2014empirical,cho2014learning}, MLP \citep{hornik1989multilayer}, TRA \citep{lin2021learning}, and HIST \citep{xu2021hist}, learn continuous representations of market data to capture temporal dependencies, cross-asset relations, and changing market patterns for return prediction.
(3) \emph{Reinforcement learning trading methods}, including FinRL-DQN, FinRL-A2C, and FinRL-PPO \citep{liu2021finrl}, learn portfolio policies by
optimizing sequential trading rewards.
(4) \emph{Financial LLM}, including FinGPT \citep{yang2023fingpt}, FinCast \citep{zhu2025fincast}, and Kronos\citep{shi2026kronos},  adapt LLMs to financial text and market context through domain-specific pretraining or fine-tuning.
(5) \emph{General LLM}, including DeepSeek V4 Pro\citep{xu2026deepseek}, DeepSeek V4 Flash\citep{xu2026deepseek}, and Qwen3.6-35B-A3B\citep{team2026qwen3}, directly generate structured trading decisions from the market context without task-specific fine-tuning.
(6) \emph{LLM-based trading agents}, including FinAgent \citep{zhang2024finagent} and Trading Agent \citep{xiao2024tradingagents}, use multi-step reasoning, reflection, memory, or tool-assisted analysis to produce trading decisions.
(7) \emph{External expert-routing methods}, including LLMoE \citep{liu2025llmoe}, FLAG-Trader \citep{xiong2025flagtrader}, TradExpert \citep{ding2024tradexpert}, and MM-DREX \citep{chen2025mmdrex}, coordinate
predefined external experts to guide trading decision generation.

We additionally include two passive strategies:
\emph{Market Buy-and-Hold}, which holds the Nasdaq Composite for the stock benchmark and Bitcoin for the cryptocurrency benchmark, and \emph{Equal-Weight Buy-and-Hold}, which allocates capital
equally across all candidate assets and holds the resulting portfolio.
Detailed baseline configurations are given in Appendix~\ref{app:baseline}.

\noindent\textbf{Evaluation protocol.}
All model training and hyperparameter selection are performed exclusively on the training and validation splits of the Stock benchmark.
The trained models are then evaluated without further training or adaptation on both the chronologically subsequent Stock test set and the held-out Crypto benchmark.
For both markets, trading decisions are processed by the same portfolio simulator with identical transaction-cost and position-limit settings, as detailed in Appendix~\ref{app:setup-sim}.
We report seven metrics, including cumulative return \citep{markowitz1952portfolio}, annualized return \citep{markowitz1952portfolio}, Sharpe ratio \citep{sharpe1966mutual}, maximum drawdown \citep{magdon2004maximum}, annualized volatility \citep{markowitz1952portfolio}, hit rate \citep{pesaran1992simple}, and average daily turnover \citep{demiguel2009optimal}.

\subsection{Performance Evaluation}\label{sec.exp.main}

\stitle{Portfolio performance.}
We report the portfolio performance on Stock and Crypto in Tables~\ref{tab:stock_main} and~\ref{tab:crypto_main} , respectively. We make three observations.
First, \model{} achieves the best cumulative return and Sharpe ratio on both benchmarks.
On Stock, it it obtains a cumulative return of $49.08\%$, outperforming the strongest baseline (LightGBM) by $30.89\%$. On Crypto, it reaches $73.79\%$, exceeding the strongest baseline (TRA) by
$30.78\%$, while also achieving the lowest maximum drawdown.
Second, all external expert-routing methods produce negative returns on Stock, suggesting that routing among manually defined experts may be too coarse to accommodate heterogeneous assets.
Third, \model{} exhibits stronger robustness across markets.
Competitive methods such as FinAgent, LightGBM perform well on Stock but deteriorate markedly on
Crypto, whereas \model{} ranks first on both benchmarks.


\stitle{Cumulative-return dynamics.}
For each benchmark, we further compares \model{} with the three strongest baselines ranked by cumulative return in the corresponding table, together with the market buy-and-hold strategy and show the results in Figure~\ref{fig:cumret_curves}.
We observe that on Stock, \model{} maintains a generally upward trajectory and preserves most
of its accumulated gains toward the end of the evaluation period.
On Crypto, although all methods exhibit substantial fluctuations, \model{}
achieves the highest terminal return with a comparatively limited drawdown.


\subsection{Ablation Study}\label{sec.exp.ablation}

\begin{table*}[t]
\centering
\caption{Ablation study of key components.}
\label{tab:ablation}
\vspace*{-10pt}
\small
\setlength{\tabcolsep}{5pt}
\resizebox{0.7\linewidth}{!}{
\begin{tabular}{lccrrrr}
\toprule
Model
& Q-K Router
& Selection Update
& Cum. Ret.$\uparrow$
& Sharpe$\uparrow$
& Max DD$\downarrow$
& Hit Rate$\uparrow$ \\
\midrule

Variant 1
& $\times$
& $\times$
& +2.83\%
& 0.519
& 8.37\%
& 53.62\% \\

Variant 2
& \checkmark
& $\times$
& +23.39\%
& 2.919
& \best{5.52\%}
& 53.62\% \\

Variant 3
& $\times$
& \checkmark
& +5.26\%
& 0.754
& 7.92\%
& 50.72\% \\

\best{\model{}}
& \checkmark
& \checkmark
& \best{+49.08\%}
& \best{5.091}
& 5.96\%
& \best{57.91\%} \\
\bottomrule
\end{tabular}
}
\vspace{1mm}

\begin{minipage}{0.7\linewidth}
\footnotesize
$\checkmark$ indicates that a component is included, and $\times$
indicates that it is excluded. Q-K router represents the query-key router.
\end{minipage}
\end{table*}

To evaluate the contribution of each component, we compare \model{} with three  ablated variants in Table~\ref{tab:ablation}. We observe that
(1) Variant 2 substantially outperforms Variant 1, showing that the query-key router provides more effective token--expert matching than conventional routing.
(2) Variant 3 brings only moderate improvements over Variant 1, suggesting
that the spare expert selection update alone is insufficient when the underlying routing scores are not sufficiently informative.
(3) The complete model clearly outperforms both single-component variants,
demonstrating that the query-key router and selection update are complementary:
the former improves expert matching, while the latter further refines routing
using feedback from sampled inactive experts.

\subsection{Leakage-Controlled Evaluation}\label{sec.exp.leakage}
LLM-based financial backtests may suffer from temporal leakage  when the
pretraining corpus overlaps with the retrospective evaluation period.
We therefore evaluate \model{} under two leakage-controlled settings.
First, we conduct prospective stock paper trading  from 2026-04-01 to 2026-6-30 using Qwen3.5-9B (released on 2026-03) as backbone.
Second, we repeat the 2025 Crypto evaluation using Qwen2.5-7B (released on 2024-09), whose
checkpoint predates the entire evaluation period.
\begin{table*}[!t]
\centering
\caption{Crypto performance under different backbone temporal settings.}
\label{tab:crypto_leakage}
\setlength{\tabcolsep}{4.5pt}
\resizebox{0.96\textwidth}{!}{
\begin{tabular}{llrrrrrrr}
\toprule
Category
& Model
& Cum. Ret.$\uparrow$
& Ann. Ret.$\uparrow$
& Sharpe$\uparrow$
& Max DD$\downarrow$
& Vol.$\downarrow$
& Hit Rate$\uparrow$
& Turnover \\
\midrule

\multirow{2}{*}{Passive}
& Market Buy \& Hold
& -6.38\%
& -6.38\%
& 0.050
& 32.03\%
& 41.72\%
& 50.14\%
& 0.000 \\
& Equal-Weight Buy \& Hold
& -15.76\% & -15.76\% & 0.055 & 44.97\% & 64.30\% & \second{52.33\%} & 0.000 \\
\midrule
\multirow{5}{*}{Selected baselines}
& TRA
& +43.01\%
& +43.01\%
& 0.838
& 39.00\%
& 72.00\%
& 49.07\%
& 0.802 \\



& LLMoE
& +18.48\%
& +18.48\%
& 0.625
& 31.18\%
& \second{39.60\%}
& \best{52.93\%}
& 0.434 \\

& FinGPT
& +17.75\%
& +17.75\%
& 0.570
& 41.15\%
& 64.88\%
& 48.55\%
& 0.007 \\

\midrule
\multirow{2}{*}{Base LLMs}
& Qwen2.5-7B Base
& -3.86\%
& -3.86\%
& -0.067
& \best{23.69\%}
& \best{22.28\%}
& 48.22\%
& 0.387 \\

& Qwen3.5-9B Base
& -4.82\%
& -4.82\%
& 0.089
& \second{25.35\%}
& 41.30\%
& 50.41\%
& 0.661 \\

\midrule
\multirow{2}{*}{Ours}
& \model{} (Qwen2.5-7B)
& \second{+68.38\%}
& \second{+68.38\%}
& \second{1.259}
& 26.82\%
& 51.88\%
& 50.96\%
& 0.739 \\

& \model{} (Qwen3.5-9B)
& \best{+73.79\%}
& \best{+73.79\%}
& \best{1.355}
& 30.14\%
& 49.65\%
& \second{51.36\%}
& 1.074 \\

\bottomrule
\end{tabular}
}

\end{table*}


We make two observations.
First, under leakage-free evaluation, \model{} consistently outperforms the
selected baselines in both prospective stock paper trading
(Figure~\ref{fig:rolling3m}) and the Qwen2.5-7B-based Crypto evaluation
(Table~\ref{tab:crypto_leakage}).
This confirms that the strong performance of \model{} does not rely on access
to future market outcomes during pretraining.
Second, the potential temporal overlap has only a limited impact.
As shown in Table~\ref{tab:crypto_leakage}, replacing Qwen2.5-7B with
Qwen3.5-9B produces only a modest performance difference, which is much
smaller than the improvement of \model{} over the corresponding base LLMs
and competing methods.
Thus, the overall advantage primarily comes from the proposed method rather
than temporal leakage.

\begin{figure}[ht]
    \centering
    \includegraphics[width=\linewidth]{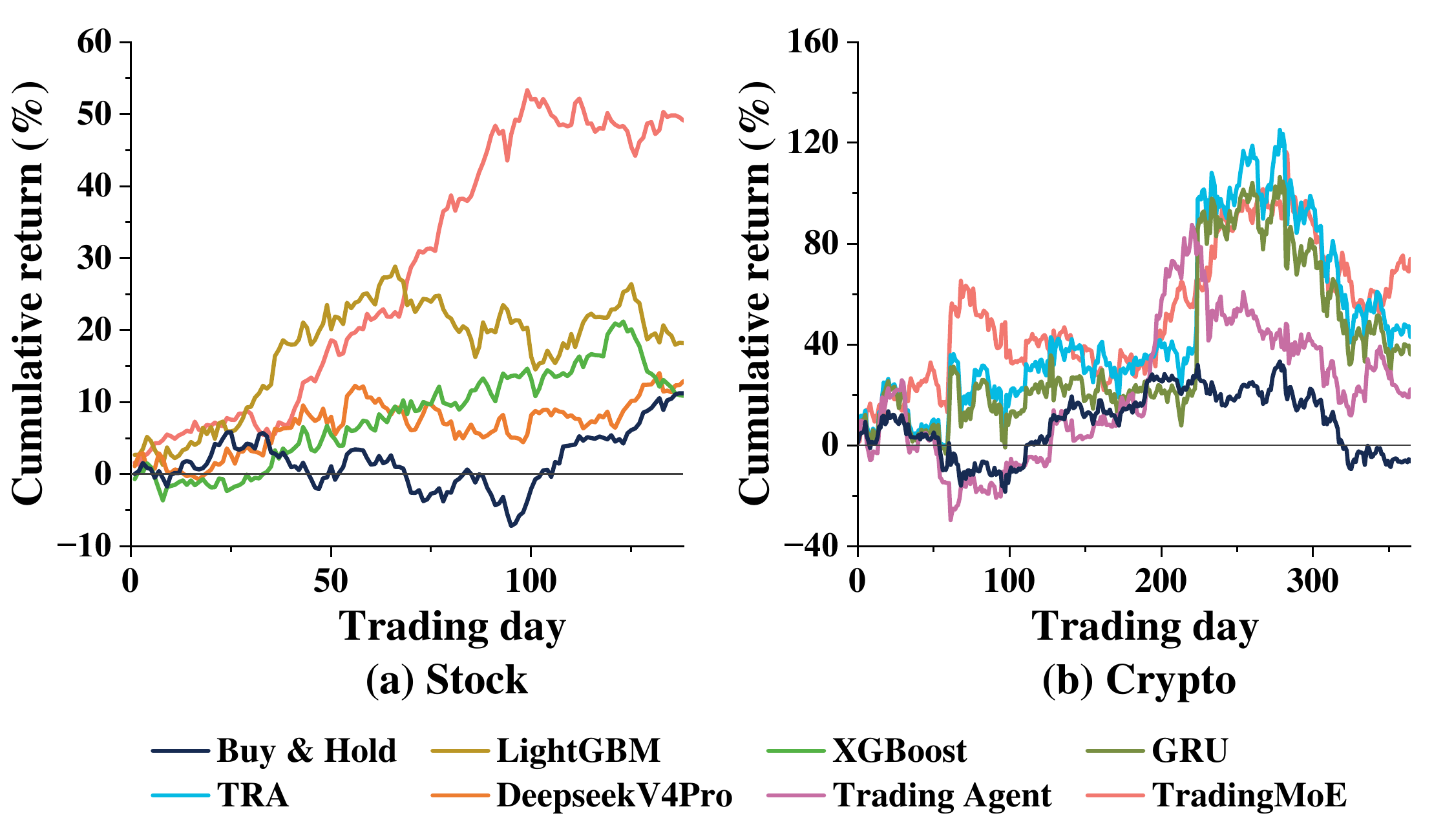}
    \vspace{-25pt}
    \caption{Cumulative return curves on Stock and Crypto.}
    \vspace{-10pt}
    \label{fig:cumret_curves}
\end{figure}

\begin{figure}[t]
    \centering
    \includegraphics[width=\linewidth]{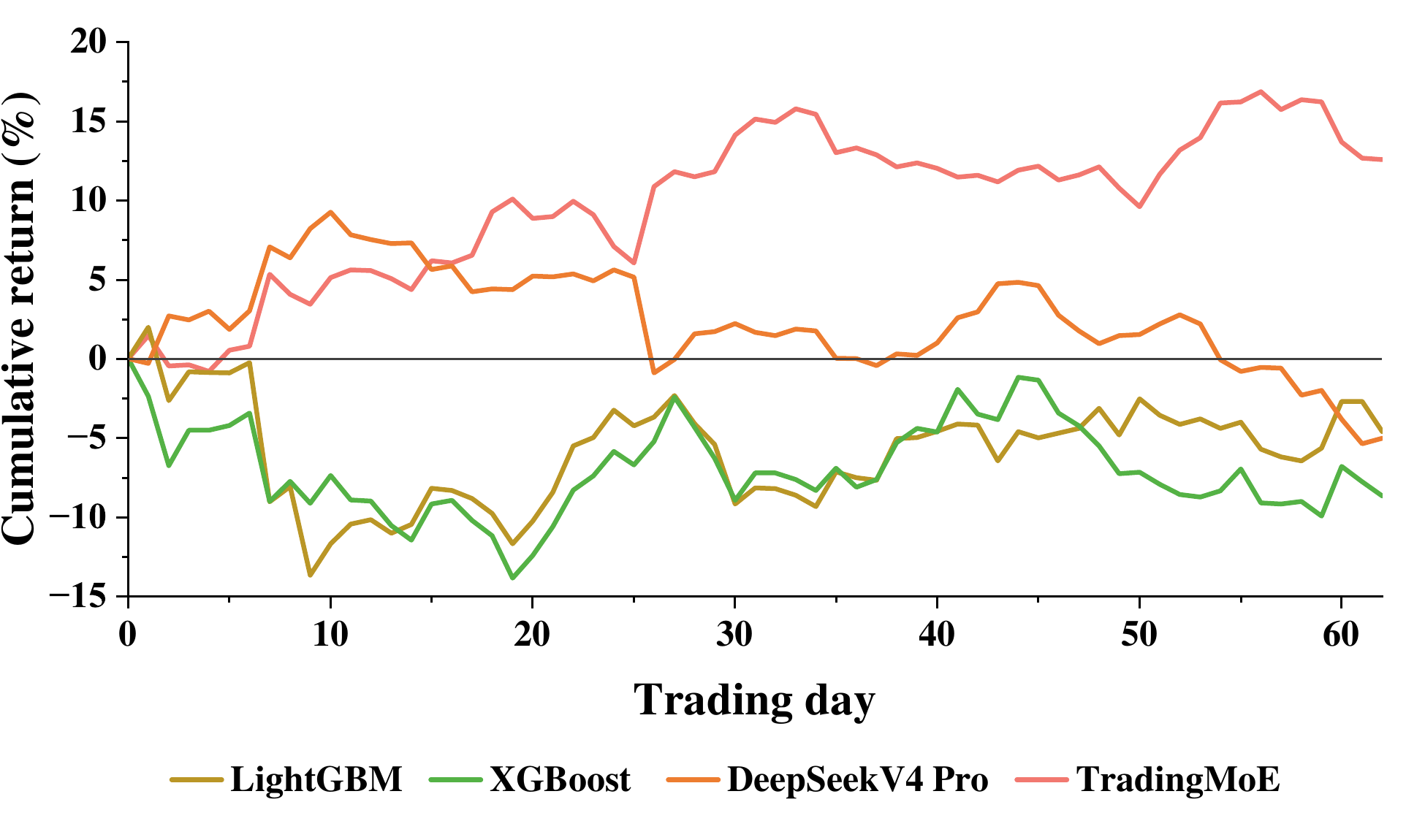}
    \vspace{-30pt}
    \caption{Cumulative return in the paper-trading.}
    \vspace{-10pt}
    \label{fig:rolling3m}
\end{figure}

\subsection{Additional Analyses}
We conduct further analyses, including
hyperparameter sensitivity in Appendix~\ref{sec.exp.qr}, multi-seed
statistical robustness Appendix~\ref{sec.exp.stat}, and transaction-cost
sensitivity Appendix~\ref{sec.exp.cost}.
\section{Conclusions}\label{sec.conclusion}
In this paper, we proposed \model{}, a trading-oriented sparse internal expert routing framework for direct trading-decision generation with LLMs. Motivated by the compact structure of counterfactual expert contribution, \model{} employs a query-key router to match token-specific trading demands with learnable expert representations in a low-dimensional space. Its sparse expert selection update evaluates a small number of inactive experts against the lowest-scored active expert, allowing the router to revise its Top-$k$ selection as market conditions evolve while retaining sparse computation. Extensive experiments on stock and cryptocurrency markets, together with rolling paper trading, demonstrate that \model{} consistently outperforms 22 baselines and generalizes effectively across different market settings.

\clearpage
\newpage

\bibliographystyle{ACM-Reference-Format}
\bibliography{references}

@article{fedus2022switch,
  title={Switch transformers: Scaling to trillion parameter models with simple and efficient sparsity},
  author={Fedus, William and Zoph, Barret and Shazeer, Noam},
  journal={Journal of Machine Learning Research},
  volume={23},
  number={120},
  pages={1--39},
  year={2022}
}

@article{liu2024deepseekv2,
  title={Deepseek-v2: A strong, economical, and efficient mixture-of-experts language model},
  author={Liu, Aixin and Feng, Bei and Wang, Bin and Wang, Bingxuan and Liu, Bo and Zhao, Chenggang and Dengr, Chengqi and Ruan, Chong and Dai, Damai and Guo, Daya and others},
  journal={arXiv preprint arXiv:2405.04434},
  year={2024}
}

@article{markowitz1952portfolio,
  author  = {Harry Markowitz},
  title   = {Portfolio Selection},
  journal = {The Journal of Finance},
  volume  = {7},
  number  = {1},
  pages   = {77--91},
  year    = {1952}
}

@article{demiguel2009optimal,
  title={Optimal versus naive diversification: How inefficient is the 1/N portfolio strategy?},
  author={DeMiguel, Victor and Garlappi, Lorenzo and Uppal, Raman},
  journal={The review of Financial studies},
  volume={22},
  number={5},
  pages={1915--1953},
  year={2009},
  publisher={Oxford University Press}
}

@article{pesaran1992simple,
  title={A simple nonparametric test of predictive performance},
  author={Pesaran, M Hashem and Timmermann, Allan},
  journal={Journal of Business \& Economic Statistics},
  volume={10},
  number={4},
  pages={461--465},
  year={1992},
  publisher={Taylor \& Francis}
}

@article{magdon2004maximum,
  title={On the maximum drawdown of a Brownian motion},
  author={Magdon-Ismail, Malik and Atiya, Amir F and Pratap, Amrit and Abu-Mostafa, Yaser S},
  journal={Journal of applied probability},
  volume={41},
  number={1},
  pages={147--161},
  year={2004},
  publisher={Cambridge University Press}
}

@article{sharpe1966mutual,
  title={Mutual fund performance},
  author={Sharpe, William F},
  journal={The Journal of business},
  volume={39},
  number={1},
  pages={119--138},
  year={1966},
  publisher={JSTOR}
}

@article{tian2024hydralora,
  title={Hydralora: An asymmetric lora architecture for efficient fine-tuning},
  author={Tian, Chunlin and Shi, Zhan and Guo, Zhijiang and Li, Li and Xu, Chengzhong},
  journal={NeurIPS},
  volume={37},
  pages={9565--9584},
  year={2024}
}

@article{hu2022lora,
  title={Lora: Low-rank adaptation of large language models.},
  author={Hu, Edward J and Shen, Yelong and Wallis, Phillip and Allen-Zhu, Zeyuan and Li, Yuanzhi and Wang, Shean and Wang, Liang and Chen, Weizhu and others},
  journal={Iclr},
  volume={1},
  number={2},
  pages={3},
  year={2022}
}

@article{garman1980estimation,
  title={On the estimation of security price volatilities from historical data},
  author={Garman, Mark B and Klass, Michael J},
  journal={Journal of business},
  pages={67--78},
  year={1980},
  publisher={JSTOR}
}

@book{wilder1978new,
  title={New concepts in technical trading systems},
  author={Wilder, J Welles},
  year={1978},
  publisher={Greensboro, NC}
}

@incollection{bridle1990probabilistic,
  title={Probabilistic interpretation of feedforward classification network outputs, with relationships to statistical pattern recognition},
  author={Bridle, John S},
  booktitle={Neurocomputing: Algorithms, architectures and applications},
  pages={227--236},
  year={1990},
  publisher={Springer}
}

@article{eckart1936approximation,
  title={The approximation of one matrix by another of lower rank},
  author={Eckart, Carl and Young, Gale},
  journal={Psychometrika},
  volume={1},
  number={3},
  pages={211--218},
  year={1936},
  publisher={Springer-Verlag}
}

@article{pearson1895vii,
  title={VII. Note on regression and inheritance in the case of two parents},
  author={Pearson, Karl},
  journal={proceedings of the royal society of London},
  volume={58},
  number={347-352},
  pages={240--242},
  year={1895},
  publisher={The Royal Society London}
}

@article{michel2019sixteen,
  title={Are sixteen heads really better than one?},
  author={Michel, Paul and Levy, Omer and Neubig, Graham},
  journal={NeurIPS},
  volume={32},
  year={2019}
}

@inproceedings{molchanov2019importance,
  title={Importance estimation for neural network pruning},
  author={Molchanov, Pavlo and Mallya, Arun and Tyree, Stephen and Frosio, Iuri and Kautz, Jan},
  booktitle={CVPR},
  pages={11264--11272},
  year={2019}
}

@article{politis1994stationary,
  title={The stationary bootstrap},
  author={Politis, Dimitris N and Romano, Joseph P},
  journal={Journal of the American Statistical association},
  volume={89},
  number={428},
  pages={1303--1313},
  year={1994},
  publisher={Taylor \& Francis}
}

@article{ledoit2008robust,
  title={Robust performance hypothesis testing with the Sharpe ratio},
  author={Ledoit, Oliver and Wolf, Michael},
  journal={Journal of Empirical Finance},
  volume={15},
  number={5},
  pages={850--859},
  year={2008},
  publisher={Elsevier}
}

@inproceedings{shi2026kronos,
  title={Kronos: A foundation model for the language of financial markets},
  author={Shi, Yu and Fu, Zongliang and Chen, Shuo and Zhao, Bohan and Xu, Wei and Zhang, Changshui and Li, Jian},
  booktitle={AAAI},
  volume={40},
  number={30},
  pages={25366--25373},
  year={2026}
}

@misc{team2026qwen3,
  title={Qwen3. 6-35B-A3B: Agentic coding power, now open to all},
  author={Team, Qwen},
  year={2026},
  publisher={April}
}

@article{xu2026deepseek,
  title={Deepseek-v4: Towards highly efficient million-token context intelligence},
  author={Xu, Anyi and Lin, Bangcai and Xue, Bing and Wang, Bingxuan and Xu, Bingzheng and Wu, Bochao and Zhang, Bowei and Lin, Chaofan and Dong, Chen and Ling, Chenchen and others},
  journal={arXiv preprint arXiv:2606.19348},
  year={2026}
}

@inproceedings{chen2016xgboost,
  title={Xgboost: A scalable tree boosting system},
  author={Chen, Tianqi and Guestrin, Carlos},
  booktitle={KDD},
  pages={785--794},
  year={2016}
}

@article{ke2017lightgbm,
  title={Lightgbm: A highly efficient gradient boosting decision tree},
  author={Ke, Guolin and Meng, Qi and Finley, Thomas and Wang, Taifeng and Chen, Wei and Ma, Weidong and Ye, Qiwei and Liu, Tie-Yan},
  journal={NeurIPS},
  volume={30},
  year={2017}
}

@article{hornik1989multilayer,
  title={Multilayer feedforward networks are universal approximators},
  author={Hornik, Kurt and Stinchcombe, Maxwell and White, Halbert},
  journal={Neural networks},
  volume={2},
  number={5},
  pages={359--366},
  year={1989},
  publisher={Elsevier}
}

@article{chung2014empirical,
  title={Empirical evaluation of gated recurrent neural networks on sequence modeling},
  author={Chung, Junyoung and Gulcehre, Caglar and Cho, KyungHyun and Bengio, Yoshua},
  journal={arXiv preprint arXiv:1412.3555},
  year={2014}
}

@inproceedings{cho2014learning,
  title={Learning phrase representations using RNN encoder--decoder for statistical machine translation},
  author={Cho, Kyunghyun and Van Merri{\"e}nboer, Bart and Gul{\c{c}}ehre, {\c{C}}a{\u{g}}lar and Bahdanau, Dzmitry and Bougares, Fethi and Schwenk, Holger and Bengio, Yoshua},
  booktitle={EMNLP},
  pages={1724--1734},
  year={2014}
}

@inproceedings{muennighoff2025olmoe,
  title={Olmoe: Open mixture-of-experts language models},
  author={Muennighoff, Niklas and Soldaini, Luca and Groeneveld, Dirk and Lo, Kyle and Morrison, Jacob and Min, Sewon and Shi, Weijia and Walsh, Pete and Tafjord, Oyvind and Lambert, Nathan and others},
  booktitle={ICLR},
  volume={2025},
  pages={62061--62121},
  year={2025}
}

@article{yang2023fingpt,
  title={Fingpt: Open-source financial large language models},
  author={Yang, Hongyang and Liu, Xiao-Yang and Wang, Christina Dan},
  journal={arXiv preprint arXiv:2306.06031},
  year={2023}
}

@article{wu2023bloomberggpt,
  title={Bloomberggpt: A large language model for finance},
  author={Wu, Shijie and Irsoy, Ozan and Lu, Steven and Dabravolski, Vadim and Dredze, Mark and Gehrmann, Sebastian and Kambadur, Prabhanjan and Rosenberg, David and Mann, Gideon},
  journal={arXiv preprint arXiv:2303.17564},
  year={2023}
}

@inproceedings{tang2023finentity,
  title={FinEntity: Entity-level Sentiment Classification for Financial Texts},
  author={Tang, Yixuan and Yang, Yi and Huang, Allen H and Tam, Andy and Tang, Justin Z},
  booktitle={EMNLP},
  pages={15465--15471},
  year={2023}
}

@inproceedings{mukherjee2022ectsum,
  title={Ectsum: A new benchmark dataset for bullet point summarization of long earnings call transcripts},
  author={Mukherjee, Rajdeep and Bohra, Abhinav and Banerjee, Akash and Sharma, Soumya and Hegde, Manjunath and Shaikh, Afreen and Shrivastava, Shivani and Dasgupta, Koustuv and Ganguly, Niloy and Ghosh, Saptarshi and others},
  booktitle={EMNLP},
  pages={10893--10906},
  year={2022}
}

@inproceedings{chen2021finqa,
  title={Finqa: A dataset of numerical reasoning over financial data},
  author={Chen, Zhiyu and Chen, Wenhu and Smiley, Charese and Shah, Sameena and Borova, Iana and Langdon, Dylan and Moussa, Reema and Beane, Matt and Huang, Ting-Hao and Routledge, Bryan R and others},
  booktitle={EMNLP},
  pages={3697--3711},
  year={2021}
}

@inproceedings{chen2022convfinqa,
  title={Convfinqa: Exploring the chain of numerical reasoning in conversational finance question answering},
  author={Chen, Zhiyu and Li, Shiyang and Smiley, Charese and Ma, Zhiqiang and Shah, Sameena and Wang, William Yang},
  booktitle={EMNLP},
  pages={6279--6292},
  year={2022}
}

@inproceedings{li2024master,
  title={Master: Market-guided stock transformer for stock price forecasting},
  author={Li, Tong and Liu, Zhaoyang and Shen, Yanyan and Wang, Xue and Chen, Haokun and Huang, Sen},
  booktitle={AAAI},
  volume={38},
  number={1},
  pages={162--170},
  year={2024}
}

@inproceedings{wang2019alphastock,
  title={Alphastock: A buying-winners-and-selling-losers investment strategy using interpretable deep reinforcement attention networks},
  author={Wang, Jingyuan and Zhang, Yang and Tang, Ke and Wu, Junjie and Xiong, Zhang},
  booktitle={KDD},
  pages={1900--1908},
  year={2019}
}

@inproceedings{ye2020reinforcement,
  title={Reinforcement-learning based portfolio management with augmented asset movement prediction states},
  author={Ye, Yunan and Pei, Hengzhi and Wang, Boxin and Chen, Pin-Yu and Zhu, Yada and Xiao, Ju and Li, Bo},
  booktitle={AAAI},
  volume={34},
  number={01},
  pages={1112--1119},
  year={2020}
}

@inproceedings{wang2021commission,
  title={Commission fee is not enough: A hierarchical reinforced framework for portfolio management},
  author={Wang, Rundong and Wei, Hongxin and An, Bo and Feng, Zhouyan and Yao, Jun},
  booktitle={AAAI},
  volume={35},
  number={1},
  pages={626--633},
  year={2021}
}

@inproceedings{dong2024fnspid,
  title={Fnspid: A comprehensive financial news dataset in time series},
  author={Dong, Zihan and Fan, Xinyu and Peng, Zhiyuan},
  booktitle={KDD},
  pages={4918--4927},
  year={2024}
}

@inproceedings{guo2024finetuning,
  title={Fine-Tuning Large Language Models for Stock Return Prediction Using Newsflow},
  author={Guo, Tian and Hauptmann, Emmanuel},
  booktitle={EMNLP Industry Track},
  pages={1028--1045},
  year={2024}
}

@inproceedings{zhu2025fincast,
  title={FinCast: A Foundation Model for Financial Time-Series Forecasting},
  author={Zhu, Zhuohang and Chen, Haodong and Qu, Qiang and Chung, Vera},
  booktitle={Proceedings of the 34th ACM International Conference on Information and Knowledge Management},
  pages={4539--4549},
  year={2025}
}

@article{xiao2024tradingagents,
  title={TradingAgents: Multi-Agents LLM Financial Trading Framework},
  author={Xiao, Yijia and Sun, Edward and Luo, Di and Wang, Wei},
  journal={arXiv preprint arXiv:2412.20138},
  year={2024}
}

@inproceedings{zhang2024finagent,
  title={A Multimodal Foundation Agent for Financial Trading: Tool-Augmented, Diversified, and Generalist},
  author={Zhang, Wentao and Zhao, Lingxuan and Xia, Haochong and Sun, Shuo and Sun, Jiaze and Qin, Molei and Li, Xinyi and Zhao, Yuqing and Zhao, Yilei and Cai, Xinyu and Zheng, Longtao and Wang, Xinrun and An, Bo},
  booktitle={Proceedings of the 30th ACM SIGKDD Conference on Knowledge Discovery and Data Mining},
  pages={4314--4325},
  year={2024},
  publisher={Association for Computing Machinery},
  doi={10.1145/3637528.3671801}
}

@inproceedings{xiong2025flagtrader,
  title={{FLAG}-{TRADER}: Fusion {LLM}-Agent with Gradient-based Reinforcement Learning for Financial Trading},
  author={Xiong, Guojun and Deng, Zhiyang and Wang, Keyi and Cao, Yupeng and Li, Haohang and Yu, Yangyang and Peng, Xueqing and Lin, Mingquan and Smith, Kaleb E. and Liu, Xiao-Yang and Huang, Jimin and Ananiadou, Sophia and Xie, Qianqian},
  booktitle={Findings of the Association for Computational Linguistics: ACL 2025},
  pages={13921--13934},
  year={2025},
  address={Vienna, Austria},
  publisher={Association for Computational Linguistics},
  doi={10.18653/v1/2025.findings-acl.716}
}

@article{ding2024tradexpert,
  title={Tradexpert: Revolutionizing trading with mixture of expert llms},
  author={Ding, Qianggang and Shi, Haochen and Guo, Jiadong and Liu, Bang},
  journal={arXiv preprint arXiv:2411.00782},
  year={2024}
}

@article{liu2025llmoe,
  title={LLM-Based Routing in Mixture of Experts: A Novel Framework for Trading},
  author={Liu, Kuan-Ming and Lo, Ming-Chih},
  journal={arXiv preprint arXiv:2501.09636},
  year={2025}
}

@article{chen2025mmdrex,
  title={{MM-DREX}: Multimodal-Driven Dynamic Routing of {LLM} Experts for Financial Trading},
  author={Chen, Yang and Jiang, Yueheng and Ma, Zhaozhao and Cao, Yuchen and Keung, Jacky and Kuang, Kun and Gan, Leilei and Wu, Yiquan and Wu, Fei},
  journal={arXiv preprint arXiv:2509.05080},
  year={2025}
}

@inproceedings{shazeer2017outrageously,
  title={Outrageously Large Neural Networks: The Sparsely-Gated Mixture-of-Experts Layer},
  author={Shazeer, Noam and Mirhoseini, Azalia and Maziarz, Krzysztof and Davis, Andy and Le, Quoc and Hinton, Geoffrey and Dean, Jeff},
  booktitle={ICLR},
  year={2017}
}

@inproceedings{lepikhin2020gshard,
  title={{GShard}: Scaling Giant Models with Conditional Computation and Automatic Sharding},
  author={Lepikhin, Dmitry and Lee, HyoukJoong and Xu, Yuanzhong and Chen, Dehao and Firat, Orhan and Huang, Yanping and Krikun, Maxim and Shazeer, Noam and Chen, Zhifeng},
  booktitle={ICLR},
  year={2021}
}

@article{jiang2024mixtral,
  title={Mixtral of experts},
  author={Jiang, Albert Q and Sablayrolles, Alexandre and Roux, Antoine and Mensch, Arthur and Savary, Blanche and Bamford, Chris and Chaplot, Devendra Singh and Casas, Diego de las and Hanna, Emma Bou and Bressand, Florian and others},
  journal={arXiv preprint arXiv:2401.04088},
  year={2024}
}

@article{lo2004adaptive,
  title={The adaptive markets hypothesis: Market efficiency from an evolutionary perspective},
  author={Lo, Andrew W},
  journal={Journal of Portfolio Management, Forthcoming},
  year={2004}
}

@article{ang2012regime,
  title={Regime changes and financial markets},
  author={Ang, Andrew and Timmermann, Allan},
  journal={Annu. Rev. Financ. Econ.},
  volume={4},
  number={1},
  pages={313--337},
  year={2012},
  publisher={Annual Reviews}
}

@inproceedings{yang2019leveraging,
  title={Leveraging {BERT} to Improve the {FEARS} Index for Stock Forecasting},
  author={Yang, Linyi and Dong, Ruihai and Ng, Tin Lok James and Xu, Yang},
  booktitle={Proceedings of the First Workshop on Financial Technology and Natural Language Processing},
  pages={54--60},
  year={2019},
  address={Macao, China}
}

@article{ding2023integrating,
  title={Integrating stock features and global information via large language models for enhanced stock return prediction},
  author={Ding, Yujie and Jia, Shuai and Ma, Tianyi and Mao, Bingcheng and Zhou, Xiuze and Li, Liuliu and Han, Dongming},
  journal={arXiv preprint arXiv:2310.05627},
  year={2023}
}

@inproceedings{wang2024llmfactor,
  title={{LLMFactor}: Extracting Profitable Factors through Prompts for Explainable Stock Movement Prediction},
  author={Wang, Meiyun and Izumi, Kiyoshi and Sakaji, Hiroki},
  booktitle={ACL Findings},
  pages={3120--3131},
  year={2024}
}

@article{xu2021hist,
  title={Hist: A graph-based framework for stock trend forecasting via mining concept-oriented shared information},
  author={Xu, Wentao and Liu, Weiqing and Wang, Lewen and Xia, Yingce and Bian, Jiang and Yin, Jian and Liu, Tie-Yan},
  journal={arXiv preprint arXiv:2110.13716},
  year={2021}
}

@inproceedings{liu2021finrl,
  title={FinRL: Deep reinforcement learning framework to automate trading in quantitative finance},
  author={Liu, Xiao-Yang and Yang, Hongyang and Gao, Jiechao and Wang, Christina Dan},
  booktitle={Proceedings of the second ACM international conference on AI in finance},
  pages={1--9},
  year={2021}
}

@article{feng2019temporal,
  title={Temporal Relational Ranking for Stock Prediction},
  author={Feng, Fuli and He, Xiangnan and Wang, Xiang and Luo, Cheng and Liu, Yiqun and Chua, Tat-Seng},
  journal={TOIS},
  volume={37},
  number={2},
  pages={1--30},
  year={2019},
  publisher={Association for Computing Machinery},
  doi={10.1145/3309547}
}

@article{jiang2017deep,
  title={A deep reinforcement learning framework for the financial portfolio management problem},
  author={Jiang, Zhengyao and Xu, Dixing and Liang, Jinjun},
  journal={arXiv preprint arXiv:1706.10059},
  year={2017}
}

@article{hochreiter1997long,
  title={Long short-term memory},
  author={Hochreiter, Sepp and Schmidhuber, J{\"u}rgen},
  journal={Neural computation},
  year={1997}
}

@inproceedings{lin2021learning,
  title={Learning multiple stock trading patterns with temporal routing adaptor and optimal transport},
  author={Lin, Hengxu and Zhou, Dong and Liu, Weiqing and Bian, Jiang},
  booktitle={Proceedings of the 27th ACM SIGKDD conference on knowledge discovery \& data mining},
  pages={1017--1026},
  year={2021}
}

\clearpage
\appendix
\section{Theoretical Analysis and Routing Diagnostics}
\label{app:method-details}

\subsection{Unbiased Inactive-Expert Sampling}
\label{app:inactive-sampling-proof}

\begin{proof}[Proof of Proposition~\ref{prop:inactive-unbiased}]
Fix token $t$ and routed layer $\ell$. Let
$\mathcal N_t^\ell=(\mathcal I_t^\ell)^c$ denote the inactive expert
set, whose size is $E-K$. For each $i\in\mathcal N_t^\ell$, let
$\mathbf v_{t,i}^\ell$ denote its fixed candidate-wise router-gradient
contribution. The gradient averaged over all inactive experts is
\begin{equation}
\mathbf v_{\mathcal N_t^\ell}
=
\frac{1}{E-K}
\sum_{i\in\mathcal N_t^\ell}
\mathbf v_{t,i}^\ell.
\end{equation}

Let $\hat{\mathcal I}_t^\ell\subset\mathcal N_t^\ell$ be a subset of
size $m$ sampled uniformly without replacement. The sampled estimate is
\begin{equation}
\widehat{\mathbf v}_{\hat{\mathcal I}_t^\ell}
=
\frac{1}{m}
\sum_{i\in\hat{\mathcal I}_t^\ell}
\mathbf v_{t,i}^\ell.
\end{equation}

For each $i\in\mathcal N_t^\ell$, let
$I_i=\mathbf 1[i\in\hat{\mathcal I}_t^\ell]$. Uniform sampling gives
$\mathbb E[I_i]=m/(E-K)$. Therefore,
\begin{equation}
\begin{aligned}
\mathbb E_{\hat{\mathcal I}_t^\ell}
\left[
\widehat{\mathbf v}_{\hat{\mathcal I}_t^\ell}
\right]
&=
\frac{1}{m}
\sum_{i\in\mathcal N_t^\ell}
\mathbb E[I_i]\mathbf v_{t,i}^\ell \\
&=
\frac{1}{E-K}
\sum_{i\in\mathcal N_t^\ell}
\mathbf v_{t,i}^\ell
=
\mathbf v_{\mathcal N_t^\ell}.
\end{aligned}
\end{equation}
Hence, the router-update gradient averaged over the sampled inactive
experts is an unbiased estimate of the gradient averaged over all
inactive experts.
\end{proof}

\subsection{Replacement-Consistent Routing-Margin Update}
\label{app:relative-credit-validity-proof}

\begin{proof}[Proof of Proposition~\ref{prop:routing-margin-update}]
For sampled inactive expert $i\in\hat{\mathcal I}_t^\ell$, recall the
expert-output difference
\begin{equation}
\Delta\mathbf e_{t,i}^\ell
=
\alpha_{t,\mathrm{low}}^\ell
\left[
e_i^\ell(\overline{\mathbf h}_t^\ell)
-
e_{\mathrm{low}}^\ell(\overline{\mathbf h}_t^\ell)
\right].
\end{equation}
Equation~\ref{eq:credit-injection} uses its detached copy and can be
written as
\begin{equation}
\boldsymbol{\xi}_t^\ell
=
\frac{\lambda_{\mathrm{sel}}}{m}
\sum_{j\in\hat{\mathcal I}_t^\ell}
\Delta s_{t,j}^\ell
\mathtt{stopgrad}
\left(
\Delta\mathbf e_{t,j}^\ell
\right).
\end{equation}
Therefore,
\begin{equation}
\frac{\partial\boldsymbol{\xi}_t^\ell}
{\partial\Delta s_{t,i}^\ell}
=
\frac{\lambda_{\mathrm{sel}}}{m}
\mathtt{stopgrad}
\left(
\Delta\mathbf e_{t,i}^\ell
\right).
\end{equation}

Let
\begin{equation}
\mathbf h_{t,0}^\ell
=
\mathtt{FFN}^\ell
\left(
\overline{\mathbf h}_t^\ell
\right)
+
\mathbf r_t^\ell
\end{equation}
denote the routed-layer output before the zero-forward injection.
Equation~\ref{eq:zero-forward-injection} gives
\begin{equation}
\mathbf h_t^\ell
=
\mathbf h_{t,0}^\ell
+
\boldsymbol{\xi}_t^\ell
-
\mathtt{stopgrad}
\left(
\boldsymbol{\xi}_t^\ell
\right).
\end{equation}
Since the stop-gradient term has zero derivative,
\begin{equation}
\left.
\frac{\partial\mathbf h_t^\ell}
{\partial\Delta s_{t,i}^\ell}
\right|_{\mathrm{sel}}
=
\frac{\lambda_{\mathrm{sel}}}{m}
\mathtt{stopgrad}
\left(
\Delta\mathbf e_{t,i}^\ell
\right).
\end{equation}
Applying the chain rule and using the identical forward value of the
detached copy yields
\begin{equation}
\left.
\frac{\partial\mathcal L_{\mathrm{task}}}
{\partial\Delta s_{t,i}^\ell}
\right|_{\mathrm{sel}}
=
\frac{\lambda_{\mathrm{sel}}}{m}
\left\langle
\frac{\partial\mathcal L_{\mathrm{task}}}
{\partial\mathbf h_t^\ell},
\Delta\mathbf e_{t,i}^\ell
\right\rangle.
\end{equation}

We next examine whether the direction of this gradient agrees with
expert-replacement utility. For intervention scale $\rho>0$, define
\begin{equation}
G_{t,i}^\ell(\rho)
=
\mathcal L_{\mathrm{task}}
\left(
\mathbf h_t^\ell
\right)
-
\mathcal L_{\mathrm{task}}
\left(
\mathbf h_t^\ell
+
\rho\Delta\mathbf e_{t,i}^\ell
\right).
\end{equation}
A first-order Taylor expansion around $\mathbf h_t^\ell$ gives
\begin{equation}
G_{t,i}^\ell(\rho)
=
-\rho
\left\langle
\frac{\partial\mathcal L_{\mathrm{task}}}
{\partial\mathbf h_t^\ell},
\Delta\mathbf e_{t,i}^\ell
\right\rangle
+
O
\left(
\rho^2
\left\|
\Delta\mathbf e_{t,i}^\ell
\right\|^2
\right).
\end{equation}

If replacing $e_{\mathrm{low}}^\ell(\cdot)$ with
$e_i^\ell(\cdot)$ is locally expected to reduce the task loss, then
$G_{t,i}^\ell(\rho)>0$ to first order and
\begin{equation}
\left\langle
\frac{\partial\mathcal L_{\mathrm{task}}}
{\partial\mathbf h_t^\ell},
\Delta\mathbf e_{t,i}^\ell
\right\rangle
<0.
\end{equation}
The induced gradient with respect to
$\Delta s_{t,i}^\ell$ is therefore negative, so gradient descent
increases the routing margin. Conversely, a locally harmful replacement
produces a positive gradient and decreases the routing margin. When the
first-order effect is zero, the margin remains unchanged to first order.
\end{proof}

\subsection{Structural Properties of Boundary-Relative Credit}
\label{app:relative-credit-structure}

The inner product appearing in
Proposition~\ref{prop:routing-margin-update} naturally defines the
boundary-relative credit
\begin{equation}
\Delta C_{t,i}^\ell
=
-
\left\langle
\frac{\partial\mathcal L_{\mathrm{task}}}
{\partial\mathbf h_t^\ell},
\Delta\mathbf e_{t,i}^\ell
\right\rangle.
\label{eq:boundary-relative-credit}
\end{equation}
Let the raw credit of expert $j$ be
\begin{equation}
C_{t,j}^\ell
=
-
\left\langle
\frac{\partial\mathcal L_{\mathrm{task}}}
{\partial\mathbf h_t^\ell},
e_j^\ell
\left(
\overline{\mathbf h}_t^\ell
\right)
\right\rangle.
\label{eq:raw-credit-app}
\end{equation}
Substituting the definition of
$\Delta\mathbf e_{t,i}^\ell$ gives
\begin{equation}
\Delta C_{t,i}^\ell
=
\alpha_{t,\mathrm{low}}^\ell
\left(
C_{t,i}^\ell
-
C_{t,\mathrm{low}}^\ell
\right).
\label{eq:relative-raw-credit}
\end{equation}

\noindent\textbf{Ordering preservation.}
For any two candidate experts $i$ and $j$,
\begin{equation}
\Delta C_{t,i}^\ell-\Delta C_{t,j}^\ell
=
\alpha_{t,\mathrm{low}}^\ell
\left(
C_{t,i}^\ell-C_{t,j}^\ell
\right).
\end{equation}
Since $\alpha_{t,\mathrm{low}}^\ell>0$, boundary-relative credit
preserves the within-token ordering of raw expert credit.

\noindent\textbf{Rank bound.}
For structural analysis, extend
$\Delta C_{t,i}^\ell$ to all experts and collect the raw credits across
$T$ target tokens and $E$ experts into
$\mathbf C^\ell\in\mathbb R^{T\times E}$. Let
$\mathbf c^\ell\in\mathbb R^T$ contain the credit of the lowest-scored
active expert for each token:
\begin{equation}
c_t^\ell
=
C_{t,\mathrm{low}_t}^\ell.
\end{equation}
We further define
\begin{equation}
\mathbf D^\ell
=
\operatorname{diag}
\left(
\alpha_{1,\mathrm{low}_1}^\ell,
\ldots,
\alpha_{T,\mathrm{low}_T}^\ell
\right).
\end{equation}
The boundary-relative credit matrix can then be written as
\begin{equation}
\Delta\mathbf C^\ell
=
\mathbf D^\ell
\left(
\mathbf C^\ell
-
\mathbf c^\ell\mathbf 1^\top
\right).
\end{equation}
Since left multiplication cannot increase rank and
$\mathbf c^\ell\mathbf 1^\top$ has rank at most one,
\begin{equation}
\begin{aligned}
\operatorname{rank}
\left(
\Delta\mathbf C^\ell
\right)
&\leq
\operatorname{rank}
\left(
\mathbf C^\ell
-
\mathbf c^\ell\mathbf 1^\top
\right) \\
&\leq
\operatorname{rank}
\left(
\mathbf C^\ell
\right)
+1.
\end{aligned}
\end{equation}
Thus, using the lowest-scored active expert as the reference preserves
expert ordering and introduces at most a rank-one shift to the raw
credit structure.\subsection{Low-Rank Credit Diagnostic}
\label{app:routing-diagnostics}

For this diagnostic, we form a token-by-expert raw credit matrix
$\mathbf C$ at every routed Transformer layer, subtract each row mean,
and compute rank-$r$ SVD reconstructions. For
OLMoE-1B-7B-0125-Instruct, each matrix is $117\times64$ and is
collected from six prompts with Top-$4$ routing. For
DeepSeek-V2-Lite, we use four prompts of at most 96 tokens with 64
routed experts and Top-$6$ routing; shared experts remain on the common
path and are not included as columns of $\mathbf C$.

\emph{Credit Energy Retained} is the fraction of row-centered
Frobenius energy retained. \emph{Higher-Credit Selection Rate} is the
fraction of target tokens for which the Top-$K$ experts selected from
the rank-$r$ reconstruction have higher total credit than those
selected by the naive internal router. Each entry in
Table~\ref{tab:intro_lowrank_credit} is averaged over all routed
layers. SVD fitting and comparison use the same matrix, so this
diagnostic evaluates representational capacity rather than held-out
prediction.

\subsection{Router--Contribution Alignment}
\label{app:routing-utility-alignment}

This diagnostic examines whether internal-router scores reflect the
actual task-loss reduction produced by expert replacement. For each
trading-action or position-size target token $t$ at routed layer
$\ell$, we identify the lowest-scored active expert
$e_{\mathrm{low}}^\ell(\cdot)$ and evaluate every inactive expert
$i\notin\mathcal I_t^\ell$.

The router margin is
$s_{t,i}^\ell-s_{t,\mathrm{low}}^\ell$, while the realized
contribution of candidate $i$ is measured by the finite replacement
gain $G_{t,i}^\ell(1)$ defined in
Proposition~\ref{prop:routing-margin-update}. We compute their Pearson
correlation over all candidate pairs in the held-out validation set,
across all routed layers and multiple training checkpoints. We also
report the fraction of target tokens containing at least one inactive
expert with a positive replacement gain.

The Pearson correlation between router margins and replacement gains is
$-0.015$, while $66.76\%$ of target tokens contain at least one
beneficial inactive expert. These results show that naive
internal-router scores provide little information about which experts
would most reduce the trading-decision loss.

\section{Experimental and Evaluation Details}
\label{app:experimental-protocol}
\subsection{Ground-Truth Decision Construction}
\label{app:ground-truth-construction}
For each trading day, we compute the open-to-close return $r_i$ of every candidate asset and select the five assets with the largest absolute returns. Ties are broken by symbol order. Assets outside the selected set are assigned
\[
y_i=(\texttt{hold},0).
\]

For each selected asset, the trading direction is determined by its return. A return of at least $0.2\%$ is labeled \texttt{long}, a return of at most $-0.2\%$ is labeled \texttt{short}, and a return between these thresholds is labeled \texttt{hold}. For a non-hold decision, the position size $q_i$ is determined by the absolute return, as shown in Table~\ref{tab:ground-truth-size}. A hold decision has $q_i=0$.

\begin{table}[h]
\centering
\caption{Position sizes based on absolute returns.}
\label{tab:ground-truth-size}
\small
\setlength{\tabcolsep}{12pt}
\begin{tabular}{cc}
\toprule
Absolute Return & Position Size $q_i$ \\
\midrule
$0.2\% \leq |r_i| \leq 0.6\%$ & 1 \\
$0.6\% < |r_i| \leq 1.2\%$ & 2 \\
$1.2\% < |r_i| \leq 1.8\%$ & 3 \\
$1.8\% < |r_i| \leq 2.4\%$ & 4 \\
$|r_i| > 2.4\%$ & 5 \\
\bottomrule
\end{tabular}
\end{table}

The resulting action and position size form the ground-truth decision $y_i=(a_i,q_i)$ for each asset, and all asset-level decisions constitute the ground-truth decision set $\mathcal Y=\{y_1,y_2,\ldots,y_M\}$.



\subsection{Data and Causal Preprocessing}\label{app:setup-data}
Table~\ref{tab:dataset-stats} gives the decision dates used by every experiment. Stock is split chronologically and is the only source of parameter updates and model selection; Crypto covers the complete 2025 calendar and is used only for out-of-domain evaluation.

\begin{table}[h]
\centering
\caption{Chronological splits used in the experiments.}
\label{tab:dataset-stats}
\small
\resizebox{\linewidth}{!}{
\begin{tabular}{lllrr}
\toprule
Market & Split & Decision dates (inclusive) & Days & Asset--day rows \\
\midrule
Stock & Train & 2021-03-31--2023-03-03 & 485 & 16,005 \\
Stock & Validation & 2023-03-06--2023-06-12 & 69 & 2,277 \\
Stock & Test & 2023-06-13--2023-12-31 & 138 & 4,554 \\
Crypto & OOD test & 2025-01-01--2025-12-31 & 365 & 3,650 \\
\bottomrule
\end{tabular}
}
\end{table}

\noindent\textbf{Sources and universes.}
Stock prices and news come from FNSPID~\citep{dong2024fnspid}. The universe was fixed before splitting by restricting candidates to U.S.-listed FNSPID equities with valid OHLCV and SEC total-asset metadata, while covering 11 sectors and large-, mid-, and small-size strata where available. It contains \texttt{AIT}, \texttt{AMT}, \texttt{AMZN}, \texttt{APD}, \texttt{AWR}, \texttt{BHP}, \texttt{BKNG}, \texttt{CAT}, \texttt{CMC}, \texttt{COP}, \texttt{CUBE}, \texttt{DUK}, \texttt{ELF}, \texttt{EXEL}, \texttt{FOXA}, \texttt{FSLY}, \texttt{GE}, \texttt{GOOGL}, \texttt{HOG}, \texttt{JNJ}, \texttt{JPM}, \texttt{KHC}, \texttt{MSFT}, \texttt{NRG}, \texttt{NVDA}, \texttt{RDN}, \texttt{RJF}, \texttt{RRC}, \texttt{TGNA}, \texttt{VICI}, \texttt{WMT}, \texttt{XOM}, and \texttt{ZBH}. Crypto combines daily OHLCV files with the CryptoCompare news archive and fixes ten USDT spot pairs before evaluation: \texttt{ADA-USDT}, \texttt{BTC-USDT}, \texttt{DOGE-USDT}, \texttt{ETH-USDT},\\ \texttt{HBAR-USDT}, \texttt{LINK-USDT}, \texttt{LTC-USDT}, \texttt{OKB-USDT}, \texttt{TRX-USDT}, \\and \texttt{XRP-USDT}.

\noindent\textbf{Prices and features.}
FNSPID retains raw OHLCV and adjusted close. Because the task and simulator use same-session open-to-close returns, labels use the contemporaneous raw open and close; adjusted close remains in the source record for corporate-action auditing. Rows without a valid date or close are removed, missing same-row OHLC fields fall back to close, and missing volume is set to zero; prices are never interpolated across dates. Each sample requires five completed bars and uses trailing 30/60-day indicators. Stock windows end at the preceding trading-day close, and Crypto windows end at the preceding UTC daily close.

\noindent\textbf{Causal news.}
Stock news is sourced from the FNSPID news corpus~\citep{dong2024fnspid}, comprising 26,305 unique asset--article records for the selected universe. At 20:00 \texttt{America/New\_York} on the preceding day, the two most recent asset-specific records from the prior seven days are retained. Crypto uses the analogous seven-day window ending at the preceding UTC day boundary, draws from coin-specific and broad-market categories, and retains at most two records by relevance, recency, and sentiment magnitude; the title, or the first 30 words when absent, forms its summary. Empty windows produce an empty news list, and every method reads the same frozen offline records.

\subsection{Unified Portfolio Construction}
\label{app:setup-sim}

All active methods are evaluated through the same decision-conversion,
portfolio-construction, and execution pipeline. For trading day $n$, the
native output of each method is first converted into the canonical
asset-level decision
$y_{n,i}=(a_{n,i},q_{n,i})$
defined in Section~\ref{sec.preliminary}, where
$a_{n,i}\in\{\texttt{long},\texttt{short},\texttt{hold}\}$
denotes the trading action and
$q_{n,i}\in[0,5]$
denotes the position size.

Table~\ref{tab:output-decision-mapping} summarizes the conversion for
different output families. Direct decision generators already produce
$a_{n,i}$ and $q_{n,i}$ and therefore require no additional conversion.
For a return forecaster, we retain the five assets with the largest
absolute predicted returns. The predicted return determines the action,
and its magnitude is converted to a position size using the intervals in
Table~\ref{tab:ground-truth-size}. For an RL policy, the signed action is
clipped to $[-1,1]$; its sign determines the action, and its magnitude
determines the position size. All conversions depend only on model
outputs and do not access realized returns.

\begin{table}[h]
\centering
\caption{Conversion of native outputs to canonical trading decisions.}
\label{tab:output-decision-mapping}
\small
\setlength{\tabcolsep}{5pt}
\resizebox{\linewidth}{!}{
\begin{tabular}{lll}
\toprule
Output family & Native output & Canonical decision \\
\midrule
Return forecaster
& Predicted return $\hat r_{n,i}$
& $a_{n,i}=\mathtt{Dir}(\hat r_{n,i})$,
  $q_{n,i}=Q(|\hat r_{n,i}|)$ \\

Decision generator
& Action $a_{n,i}$, position size $q_{n,i}$
& $(a_{n,i},q_{n,i})$ \\

RL policy
& Signed action $\hat a_{n,i}$
& $a_{n,i}=\mathtt{Dir}(\hat a_{n,i})$,
  $q_{n,i}=5|\operatorname{clip}(\hat a_{n,i},-1,1)|$ \\
\bottomrule
\end{tabular}
}
\end{table}

For return forecasts, $\mathtt{Dir}(z)$ returns \texttt{long} when
$z\geq0.2\%$, \texttt{short} when $z\leq-0.2\%$, and \texttt{hold}
otherwise. For signed RL actions, it returns \texttt{long} when $z>0$,
\texttt{short} when $z<0$, and \texttt{hold} when $z=0$.
$Q(\cdot)$ denotes the return-to-position-size mapping in
Table~\ref{tab:ground-truth-size}.

Let $\mathcal S_n$ denote the resulting set of at most five active
assets. If a method produces more than five active decisions, only the
five with the largest position sizes are retained, with ticker order
used to break ties. The target portfolio weight of asset $i$ is
\begin{equation}
w_{n,i}
=
\begin{cases}
0.2\,\sigma(a_{n,i})\,q_{n,i}/5,
& i\in\mathcal S_n,\\
0,
& \text{otherwise},
\end{cases}
\label{eq:unified-position-postprocessor}
\end{equation}
where
$\sigma(\texttt{long})=1$,
$\sigma(\texttt{short})=-1$, and
$\sigma(\texttt{hold})=0$.
This construction limits each asset to 20\% of the portfolio and
ensures
$\sum_i|w_{n,i}|\leq1$.
Assets that are missing, unselected, or assigned \texttt{hold} receive
zero weight. Unknown assets are discarded, and a response containing
no valid decision produces an empty portfolio.

At the beginning of each trading day, the portfolio is rebalanced from
the previous target weights $\mathbf w_{n-1}$ to the current target
weights $\mathbf w_n$. The resulting turnover is
\begin{equation}
u_n
=
\sum_i
\left|
w_{n,i}-w_{n-1,i}
\right|,
\qquad
\mathbf w_0=\mathbf 0.
\label{eq:portfolio-turnover}
\end{equation}
Thus, an asset that is no longer selected is reduced to zero weight,
while a newly selected asset is adjusted from zero to its target
weight. For an asset retained in the portfolio, transaction costs are
incurred only on the change in its target weight. Reversing a position
from long to short, or from short to long, incurs turnover equal to the
full difference between the two signed weights.

Let $r_{n,i}$ denote the open-to-close return of asset $i$ on day $n$.
The net daily portfolio return is
\begin{equation}
r_n^{\mathrm{net}}
=
\sum_i w_{n,i}r_{n,i}
-
c\,u_n,
\qquad
c=0.0005,
\label{eq:portfolio-accounting}
\end{equation}
where $c$ is the one-way transaction-cost rate of 5 bps. The daily
return is computed using the current target weights $\mathbf w_n$, and
no additional full-liquidation cost is imposed at the end of the day.

Signed weights permit both long and short positions, while the unit
gross-exposure limit prevents leverage. Unallocated capital is treated
as zero-return cash. Borrow fees, additional slippage, and cash interest
are not modeled. The same conversion, rebalancing, and transaction-cost
rules are applied to \model{} and all active baselines. The passive
Buy\&Hold benchmarks bypass this pipeline and are reported only as
references.

\subsection{Metrics}\label{app:setup-metrics}
Let $T$ be the number of dates, $r_t=r_t^{\mathrm{net}}$, $V_0=1$, $V_t=\prod_{s=1}^{t}(1+r_s)$, and $A_M=252$ for Stock and $365$ for Crypto. We report cumulative return $V_T-1$, annualized return $V_T^{A_M/T}-1$, Sharpe ratio $\sqrt{A_M}\,\bar r/\operatorname{std}(r)$ with zero risk-free rate, annualized volatility $\sqrt{A_M}\,\operatorname{std}(r)$, and conventional maximum drawdown. For target weight $w_{t,i}$ and realized same-day open-to-close return $r_{t,i}$, Hit Rate is
\begin{equation}
\operatorname{HitRate}
=
\frac{\sum_{t,i}\mathbf{1}[w_{t,i}r_{t,i}>0]}
{\sum_{t,i}\mathbf{1}[|w_{t,i}|>0,\ |r_{t,i}|>0]}.
\label{eq:directional-hit-rate}
\end{equation}
Thus unselected assets, \texttt{hold} decisions, and zero realized returns do not enter the denominator. Average daily turnover is $T^{-1}\sum_t\tau_t$ and counts both the opening and closing legs without the conventional one-half factor.

\subsection{Statistical Robustness Protocol}\label{app:stat-robust}

We retrain \model{} on Stock with five random seeds, which control adapter and router initialization, shadow sampling, and data ordering; all other settings match Appendix~\ref{app:impl}. Table~\ref{tab:seed-robust} reports per-seed results.

\begin{table}[h]
\centering
\caption{Per-seed results on Stock against LightGBM.}
\label{tab:seed-robust}
\small
\resizebox{\linewidth}{!}{
\begin{tabular}{lrrrrl}
\toprule
Run & Cum. Ret. & Sharpe & LW $p$ & HAC $p$ & Return 95\% CI \\
\midrule
Seed 39 & +50.05\% & 5.366 & 0.026 & 0.020 & [+23.96\%, +84.94\%] \\
Seed 40 & +54.68\% & 5.827 & 0.020 & 0.008 & [+33.04\%, +81.19\%] \\
Seed 41 & +48.09\% & 4.914 & 0.048 & 0.036 & [+21.08\%, +84.09\%] \\
Seed 42 (ref.) & +49.08\% & 5.091 & 0.085 & 0.076 & [+17.32\%, +90.99\%] \\
Seed 43 & +45.72\% & 4.678 & 0.028 & 0.033 & [+23.99\%, +72.58\%] \\
\midrule
Seed average & +49.52\% & --- & 0.009 & 0.004 & [+28.51\%, +75.09\%] \\
\bottomrule
\end{tabular}
}
\parbox{\linewidth}{\footnotesize LW and HAC are one-sided $p$-values against LightGBM; Section~\ref{sec.exp.stat} reports the corresponding two-sided values for the seed average.}
\end{table}

\noindent\textbf{Tests.}
All tests compare daily net return series over the common test dates against LightGBM, the strongest baseline. The Ledoit--Wolf test is the HAC-robust Sharpe-difference test of \citet{ledoit2008robust}; the HAC test is a Newey--West-adjusted $t$-test on the mean daily return difference with automatic lag selection. The seed-average row tests the arithmetic mean of the five daily return series, whose null hypothesis is that the expected daily excess return over LightGBM is non-positive; we do not report a Sharpe point estimate for this averaged series, since cross-seed averaging mechanically reduces volatility and would overstate the Sharpe ratio of a single deployed model.

\noindent\textbf{Confidence intervals.}
Intervals are stationary-block-bootstrap percentile intervals \citep{politis1994stationary} with expected block length $12$ and $10{,}000$ resamples. Under block lengths $10$, $12$, $15$, and $20$, the seed-average return lower bound varies only within $[+28.5\%,+29.1\%]$ and the Sharpe lower bound within $[4.70,4.82]$.

\section{Further Descriptions of Baselines}\label{app:baseline}

We provide additional descriptions of the baseline methods used in
our experiments. All baselines follow the chronological information
cutoff and portfolio-execution protocol described in
Section~\ref{sec.exp}. Methods that require task-specific optimization
are trained only on the Stock training split, while Crypto is retained
as a held-out cross-market benchmark without further training or
adaptation. Each method preserves its original architecture and
learning objective. We standardize only its terminal interface with
the trading environment: methods that directly generate trading
decisions produce the decision JSON described in Section~\ref{sec.preliminary},
whereas methods that output return forecasts or ranking scores are
connected to the same forecast-to-decision conversion and portfolio
simulator. All decisions are evaluated under identical transaction
costs, position constraints, and daily execution frequency.

\noindent\textbf{(1) Passive Benchmark.}
\begin{itemize}
    \item \textbf{Market Buy-and-Hold:}
    A passive strategy that maintains a fixed market exposure
    throughout the evaluation period. We use NASDAQ Buy-and-Hold
    for Stock and BTC Buy-and-Hold for Crypto.

    \item \textbf{Equal-Weight Buy-and-Hold:}
    A passive strategy that allocates capital equally across all
    candidate assets at the beginning of the evaluation period and
    holds the resulting portfolio without active rebalancing.
\end{itemize}

\noindent\textbf{(2) Gradient-Boosted Tree Models.}
Both tree-based methods operate on the same causal tabular market
features and predict future returns, which are subsequently translated
into portfolio decisions using the common execution protocol.

\begin{itemize}
    \item \textbf{LightGBM} \citep{ke2017lightgbm}: A histogram-based
    gradient-boosting framework that grows trees leaf-wise and
    efficiently models nonlinear interactions among price, volume,
    technical, and portfolio-state features.

    \item \textbf{XGBoost} \citep{chen2016xgboost}: An additive
    tree-ensemble model that sequentially fits regression trees to the
    residual errors of the preceding ensemble. Regularization is applied
    to control tree complexity and reduce overfitting.
\end{itemize}

\noindent\textbf{(3) Deep-Learning Predictors.}
These methods learn next-day return signals from structured market
histories. Their return predictions are converted into the common
action and position-size space before portfolio execution.

\begin{itemize}
    \item \textbf{LSTM} \citep{hochreiter1997long}: A recurrent sequence
    model that encodes the historical market trajectory using gated
    memory cells. The final temporal representation is used to predict
    the future return of each asset.

    \item \textbf{GRU} \citep{chung2014empirical,cho2014learning}: A
    recurrent predictor that models temporal dependencies using update
    and reset gates. Compared with LSTM, it adopts a more compact
    recurrent structure while retaining the ability to capture
    long-range market patterns.

    \item \textbf{MLP} \citep{hornik1989multilayer}: A feed-forward
    neural predictor that flattens the rolling market features into a
    fixed-dimensional representation and maps it to asset-level
    future-return estimates. It serves as a basic dense neural baseline
    without an explicit temporal module.

    \item \textbf{TRA} \citep{lin2021learning}: A temporal routing
    architecture that maintains multiple return predictors for different
    latent trading patterns. A routing adaptor assigns each sample to the
    relevant predictors, while an optimal-transport objective encourages
    the predictors to capture distinct temporal regimes.

    \item \textbf{HIST} \citep{xu2021hist}: A relation-aware stock
    predictor that models cross-asset dependencies through
    concept-oriented representations. It aggregates information
    associated with both predefined and latent market concepts to
    generate asset-level return forecasts.
\end{itemize}

\noindent\textbf{(4) Reinforcement-Learning Trading Methods.}
The FinRL variants receive a structured market and portfolio state
and learn trading actions through interaction with the same portfolio
simulator used for evaluation.

\begin{itemize}
    \item \textbf{FinRL-DQN} \citep{liu2021finrl}: A value-based trading
    agent that estimates the expected long-term return of each discrete
    portfolio action using a deep Q-network. The action with the highest
    estimated value is selected at each trading step.

    \item \textbf{FinRL-A2C} \citep{liu2021finrl}: An actor--critic
    trading method in which the actor generates portfolio actions and
    the critic estimates their state values. The advantage signal is used
    to jointly optimize the policy and value networks.

    \item \textbf{FinRL-PPO} \citep{liu2021finrl}: A policy-gradient
    method that updates the trading policy using a clipped surrogate
    objective, limiting excessively large policy changes and improving
    optimization stability in the portfolio environment.
\end{itemize}

\noindent\textbf{(5) Financial Language and Time-Series Models.}
These baselines use financial-domain pre-training to provide return
forecasts or direct trading decisions. Forecasting outputs are passed
through the shared forecast-to-decision conversion before execution.

\begin{itemize}
    \item \textbf{FinGPT} \citep{yang2023fingpt}: A finance-oriented
    language model adapted to financial texts and instruction-following
    tasks. We provide the same structured trading context and use its
    generated response as the final decision under the common JSON
    schema.

    \item \textbf{FinCast} \citep{zhu2025fincast}: A foundation model
    for financial time-series forecasting. It produces asset-level
    future forecasts from historical market sequences, which are
    converted into the standardized trading actions and position sizes.

    \item \textbf{Kronos} \citep{shi2026kronos}: A pre-trained financial
    time-series model that captures temporal patterns from market
    sequences and predicts future price or return dynamics. Its forecasts
    are evaluated through the same portfolio-construction and execution
    pipeline as the other forecasting baselines.
\end{itemize}

\noindent\textbf{(6) Frontier LLM Baselines.}
These models directly receive the structured market-state prompt and
generate the required decision JSON without task-specific fine-tuning.
The same prompt template, temporal cutoff, asset universe, and output
schema are used across all three models.

\begin{itemize}
    \item \textbf{DeepSeek V4 Pro} \citep{xu2026deepseek}: The
    full-capability DeepSeek endpoint is prompted to analyze the supplied
    market context and directly generate predicted returns, actions, and
    position-size levels for the selected assets.

    \item \textbf{DeepSeek V4 Flash} \citep{xu2026deepseek}: A separate
    DeepSeek serving variant evaluated with the same input and output
    protocol as DeepSeek V4 Pro. It serves as an additional
    direct-decision LLM baseline under an identical trading environment.

    \item \textbf{Qwen3.6-35B-A3B} \citep{team2026qwen3}: The official
    Qwen3.6-35B-A3B checkpoint that processes the complete trading
    prompt and directly produces the standardized decision JSON. No
    parameters are updated on the Stock or Crypto benchmark.
\end{itemize}

\noindent\textbf{(7) LLM-Based Trading Agents.}
Unlike direct prompting, these methods retain their original agentic
workflows, including intermediate analysis, reflection, and planning.
Their final recommendations are converted into the common decision
schema and evaluated by the same portfolio simulator.

\begin{itemize}
    \item \textbf{FinAgent} \citep{zhang2024finagent}: A tool-augmented
    financial agent that combines multimodal market information with
    intermediate analysis and reflection. Its agent workflow produces a
    final trading recommendation after integrating the available market
    evidence.

    \item \textbf{Trading Agent} \citep{xiao2024tradingagents}: A
    role-based multi-agent trading framework in which specialized agents
    conduct market analysis, debate candidate strategies, assess risk,
    and pass their conclusions to a final trading agent for decision
    generation.
\end{itemize}

\noindent\textbf{(8) External Expert-Routing Methods.}
These approaches coordinate human-defined external experts at the
model or workflow level. We preserve the expert decomposition and
routing mechanism proposed by each method, while evaluating the final
decision under our common temporal and portfolio protocol.

\begin{itemize}
    \item \textbf{LLMoE} \citep{liu2025llmoe}: An LLM-based
    mixture-of-experts framework in which a model-level router selects
    or combines predefined financial experts according to the current
    market context. The routed expert outputs are aggregated to form the
    final trading recommendation.

    \item \textbf{FLAG-Trader} \citep{xiong2025flagtrader}: A financial
    trading agent that combines LLM-generated market representations
    with gradient-based reinforcement learning. The trainable policy uses
    trading feedback to refine the actions derived from the LLM-agent
    workflow.

    \item \textbf{TradExpert} \citep{ding2024tradexpert}: A trading
    framework composed of specialized LLM experts responsible for
    different analytical roles or information sources. A coordinating
    module integrates their outputs into a unified trading decision.

    \item \textbf{MM-DREX} \citep{chen2025mmdrex}: A multimodal
    dynamic-routing framework that conditions expert selection on
    heterogeneous market signals. Its router dynamically activates or
    weights external LLM experts before aggregating their
    recommendations.
\end{itemize}

The above standardization affects only the terminal decision interface
and portfolio execution. Each baseline retains its original internal
architecture, expert organization, training objective, and inference
workflow.

\section{Implementation Details}\label{app:impl}

\noindent\textbf{Environment.}
The environment in which we run experiments is:
\begin{itemize}
    \item Operating system: Ubuntu 22.04;
    \item GPU information: 16 NVIDIA A800-SXM4-80GB GPUs.
\end{itemize}

\noindent\textbf{Optimizer.}
For \model{}, we use the AdamW optimizer. For all trainable
baselines, we retain the architectures, optimizers, learning-rate
schedules, and hyperparameters specified by their official releases.
We perform no method-specific search on either test set; where an
official procedure selects a checkpoint, selection uses only the
Stock validation split.

\noindent\textbf{Details of baselines.}
For all open-source baselines, we use the officially provided code or
a faithful interface wrapper around it, together with the released
architecture and optimization settings. Only dataset-dependent input
and output dimensions are adapted to the common market features,
asset universe, and trading-decision space; the same portfolio
postprocessor and simulator are then applied to every method.

For FinRL-DQN, FinRL-A2C, and FinRL-PPO
\citep{liu2021finrl}, we use the official FinRL implementations.
FinRL-A2C uses 5 rollout steps, a learning rate of $7\times10^{-4}$,
and an entropy coefficient of $0.01$. FinRL-PPO uses 2,048 rollout
steps, a batch size of 64, a learning rate of $2.5\times10^{-4}$,
and an entropy coefficient of $0.01$. FinRL-DQN follows the default
DQN configuration provided in the official implementation.

For LSTM \citep{hochreiter1997long} and GRU
\citep{chung2014empirical,cho2014learning}, we employ two recurrent
layers with a hidden dimension of 64 and dropout of 0. The models are
trained for at most 200 epochs with a learning rate of $10^{-3}$ and a
batch size of 800. For MLP \citep{hornik1989multilayer}, we use the
official Qlib configuration with a learning rate of $2\times10^{-3}$,
a batch size of 4,096, and at most 8,000 optimization steps. For TRA
\citep{lin2021learning}, we use a two-layer LSTM backbone with a
hidden dimension of 64 and three latent trading states. Its routing
adaptor is implemented as a one-layer LSTM with a hidden dimension
of 32; the sequence length and batch size are set to 60 and 1,024,
respectively. For HIST \citep{xu2021hist}, we employ a two-layer
LSTM backbone with a hidden dimension of 64, use MSE as the
prediction loss, and set the learning rate to $10^{-4}$.

For LightGBM \citep{ke2017lightgbm} and XGBoost
\citep{chen2016xgboost}, we retain the hyperparameters in their
official baseline configurations without task-specific tuning. The
executable configuration files used for all reported runs are
included in the supplementary material.

For Qwen3.6-35B-A3B \citep{team2026qwen3}, DeepSeek V4 Pro
\citep{xu2026deepseek}, and DeepSeek V4 Flash
\citep{xu2026deepseek}, we use the official identifiers
\texttt{Qwen/Qwen3.6-35B-A3B}, \texttt{deepseek-v4-pro}, and
\texttt{deepseek-v4-flash}, respectively, with default reasoning
configurations without task-specific fine-tuning. All three models
receive the same trading prompt and decision format as \model{}.
For the prospective DeepSeek curve in Figure~\ref{fig:rolling3m},
we use the flagship endpoint available at each decision date:
DeepSeek-V3.2 through 2026-04-23 and DeepSeek-V4-Pro from
2026-04-24 onward.

For FinAgent \citep{zhang2024finagent}, we preserve the officially
released market-intelligence, diversified-memory, reflection, and
decision modules. For Trading Agent
\citep{xiao2024tradingagents}, we retain its original analyst roles,
debate procedure, risk-management module, and default model
assignments.

For LLMoE \citep{liu2025llmoe}, we use a five-day lookback window,
a Llama-3.2 router, and two sentiment-oriented experts. Each expert
uses a feed-forward network with hidden dimensions of 128, 64, and
32, with dropout rates of $0.3$ and $0.2$ after the second and third
layers. For FLAG-Trader \citep{xiong2025flagtrader}, we use
SmolLM2-135M-Instruct as the backbone and set the learning rate,
rollout length, discount factor, and PPO clipping coefficient to
$5\times10^{-4}$, 40, $0.95$, and $0.2$, respectively. The model
is trained for 13,860 steps. For TradExpert
\citep{ding2024tradexpert}, we employ four specialized experts and
one general expert based on LLaMA-2-7B. LoRA rank, scaling factor,
and dropout are set to 8, 16, and $0.1$, respectively; the model is
trained for 30 epochs with a learning rate of $10^{-4}$, a batch size
of 4, and 8 gradient-accumulation steps. For MM-DREX
\citep{chen2025mmdrex}, we use Qwen2.5-VL-72B-Instruct with four
expert heads and a 100-day lookback window. Its LoRA rank, scaling
factor, dropout, and learning rate are set to 16, 32, $0.05$, and
$3\times10^{-5}$, respectively; the discount factor and policy
clipping coefficient are set to $0.99$ and $0.15$.

For FinGPT \citep{yang2023fingpt}, we use the officially released
Llama-2-7B LoRA financial-forecaster checkpoint. For FinCast
\citep{zhu2025fincast}, we use its released pre-trained checkpoint
under the recommended zero-shot forecasting setting. For Kronos
\citep{shi2026kronos}, we use the officially released checkpoint and
default inference configuration, with temperature $1.0$, Top-$p$
$0.9$, and one forecast sample.

\noindent\textbf{Details of \model{}.}
For our proposed \model{}, we freeze Qwen3.5-9B and patch every Transformer MLP block with shared-down LoRA experts. We use $r_L{=}12$, $\alpha_L{=}24$, $E{=}64$, $d_q{=}16$, Top-$k{=}4$, and $m{=}2$ sampled inactive experts. The bias-free query head is $\operatorname{Linear}(d,d_q)$--GELU--$\operatorname{Linear}(d_q,d_q)$; expert codes and up-projections are initialized from $\mathcal N(0,0.02^2)$ and $\mathcal N(0,10^{-6})$, respectively, while the remaining linear maps use PyTorch defaults. Router-score temperature is $1.0$, selected-weight temperature is $\tau{=}0.02$, credit scale is $\lambda{=}0.05$, and the RMS floor is $\epsilon{=}10^{-6}$; the balance, z-loss, and entropy coefficients are all zero in the reported runs.

Training uses AdamW with learning rate $10^{-4}$,$(\beta_1,\beta_2)=(0.9$ \\ $,0.999)$, optimizer $\epsilon=10^{-8}$, weight decay $0.01$, no warmup or scheduler, micro-batch size one, one gradient-accumulation step, and gradient clipping at $1.0$. We train for 20,000 updates in \texttt{bfloat16} with DeepSpeed ZeRO-3 and eight-way sequence parallelism, use a 16,000-token training limit and seed 42, and save every 5,000 updates. We select the checkpoint with the highest Stock validation cumulative return (step 20,000, also the highest validation Sharpe) before a single test evaluation; decoding is greedy with a 32,000-token input limit and at most 200 generated tokens. The backbone remains frozen, while the shared down-projections, expert up-projections, query heads, and expert codes are updated; inference removes shadows and credit assignment and executes only ordinary Top-$k$ routing.

One prompt is formed for each decision date and lists the complete candidate-asset universe alphabetically. Each asset entry contains the sector, market capitalization, five completed OHLCV bars, technical indicators, and timestamped news defined in Section~\ref{sec.preliminary}. The model is instructed to generate the canonical asset-level decisions $y_i=(a_i,q_i)$. Unmentioned assets are assigned $(\texttt{hold},0)$ to form the complete decision set $\mathcal Y$.

\section{Additional Experimental Analyses}\label{app:experiment}

\subsection{Hyperparameter Sensitivity}\label{sec.exp.qr}

\begin{figure}[ht]
    \centering
    \includegraphics[width=1\linewidth]{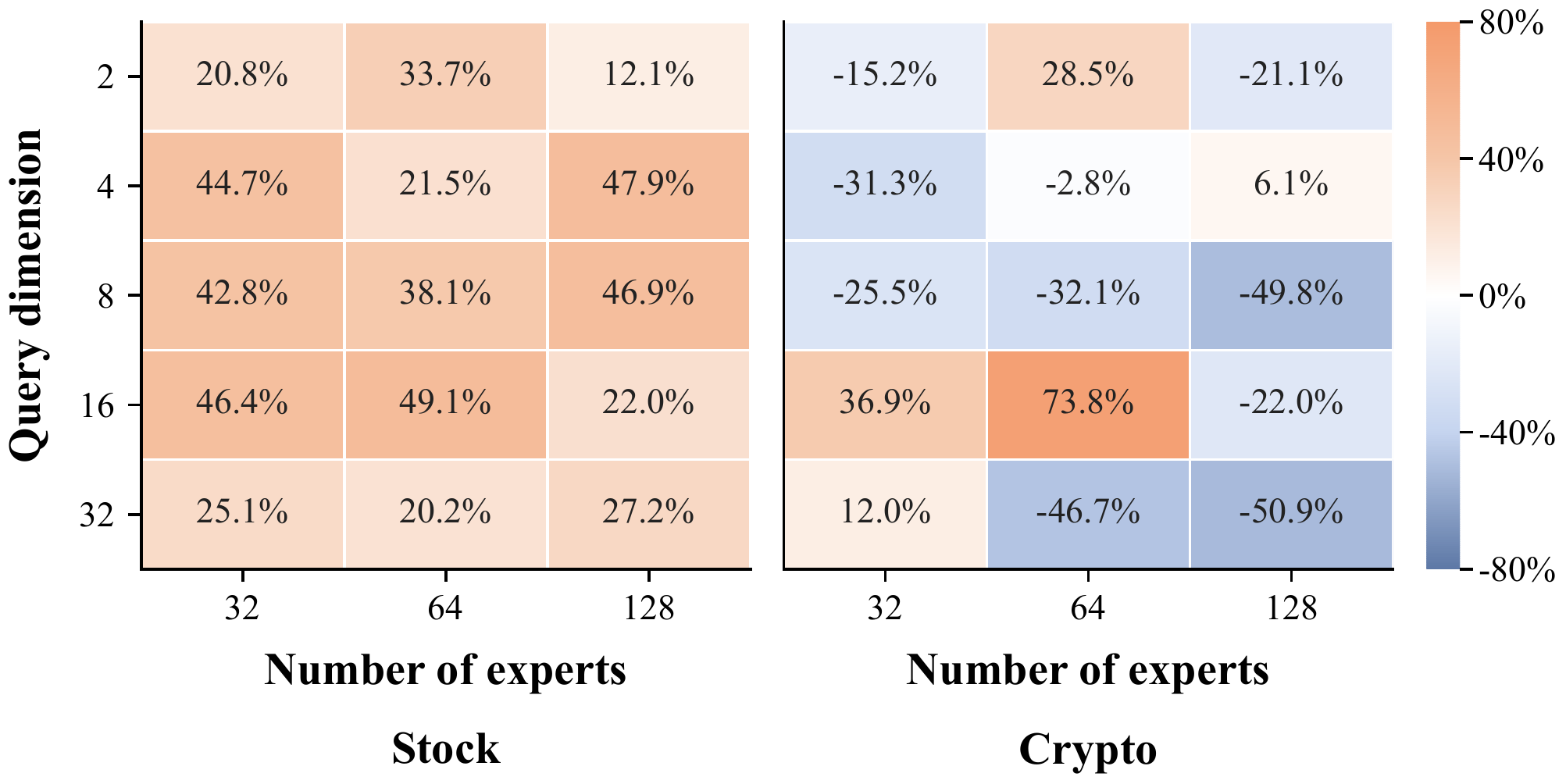}
    \vspace{-20pt}
    \caption{Impact of expert count and query dimension.}
    \label{fig:qr_heatmap}
\end{figure}

We vary the number of each layer experts $E$ and query dimension $d_\vec{q}$ while keeping all other settings fixed, as shown in Fig.~\ref{fig:qr_heatmap}.
We observe that performance is sensitive to their joint configuration: overly large expert
sets or query dimensions often degrade returns, especially on Crypto.
The setting $E=64$ and $d_{\mathrm q}=16$ achieves best performance on
both markets, consistent with the compact token--expert structure identified
in our analysis. 

\subsection{Statistical Robustness}\label{sec.exp.stat}

To verify that the advantage on Stock is not an artifact of a single training run, we retrain \model{} with five random seeds and analyze the resulting daily return series; the full protocol and per-seed results are given in Appendix~\ref{app:stat-robust}.
Across seeds, \model{} attains a cumulative return of $+49.52\%\pm3.30\%$ and a Sharpe ratio of $5.18\pm0.44$ (mean$\pm$std); even the worst seed ($+45.72\%$, Sharpe $4.678$) remains more than $2.5\times$ the strongest baseline, LightGBM ($+18.19\%$, $1.723$).
Testing the seed-averaged daily returns against LightGBM, the Ledoit--Wolf robust Sharpe test rejects equal performance at two-sided $p=0.018$, and a HAC-adjusted mean test at $p=0.007$; per seed, four of five runs are significant at the $5\%$ level and all five at the $10\%$ level.
Stationary-block-bootstrap $95\%$ confidence intervals place the seed-averaged cumulative return in $[+28.5\%,+75.1\%]$ and the Sharpe ratio above $4.73$, so even the lower confidence bounds exceed the strongest baseline's point estimates; these bounds are stable across bootstrap block lengths of $10$--$20$ trading days.
All remaining experiments use the reference seed ($+49.08\%$), which lies within one standard deviation of the five-seed mean.

\subsection{Transaction Cost Sensitivity}\label{sec.exp.cost}

\begin{figure}[ht]
    \centering
    \includegraphics[width=0.8\linewidth]{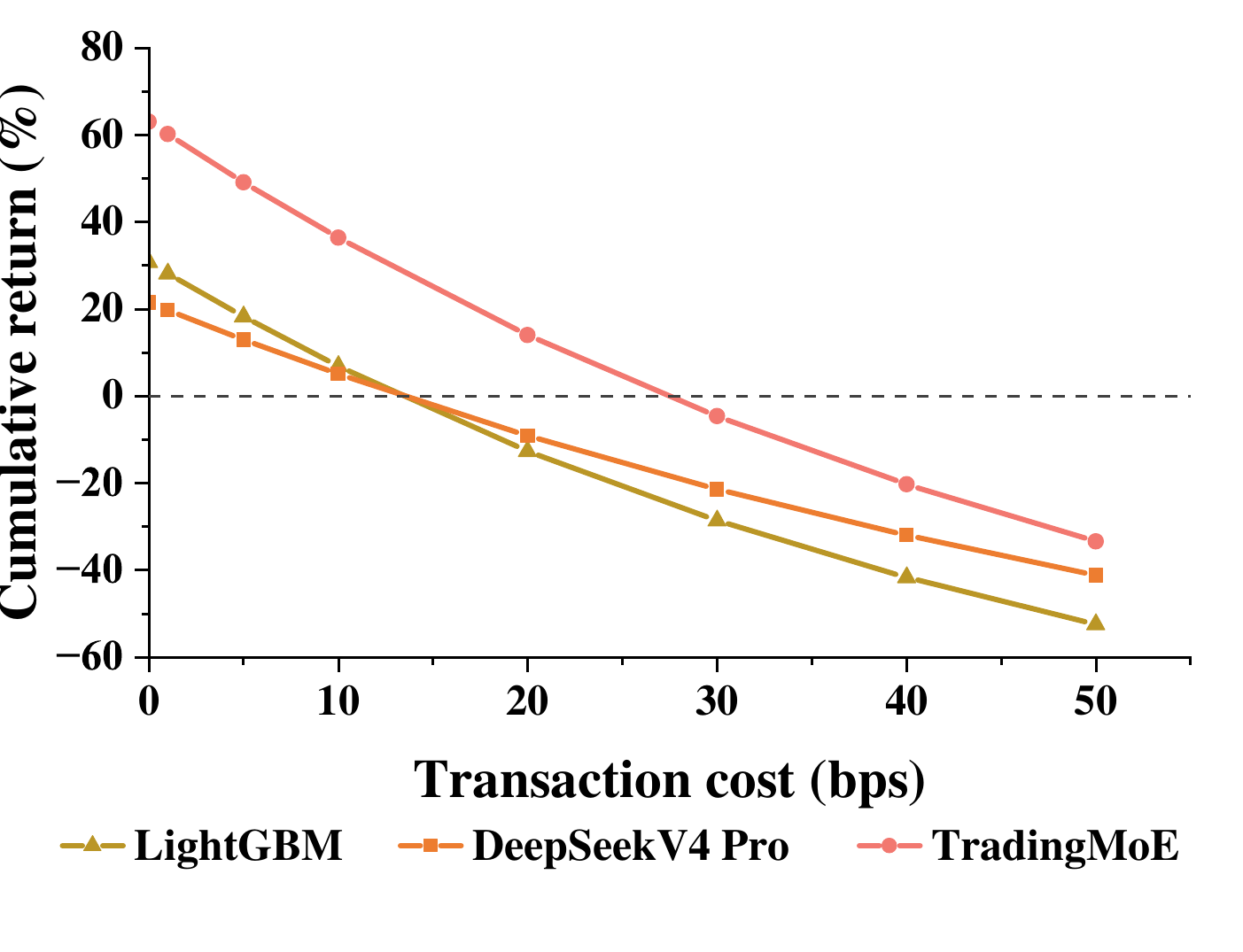}
    \vspace{-20pt}
    \caption{Cumulative return on Stock under varying one-way transaction-cost rates.}
    \vspace{-10pt}
    \label{fig:cost_sensitivity}
\end{figure}

Since \model{} trades with an average daily turnover of $1.298$, we examine how its advantage depends on the assumed cost level by re-running the Stock backtest under one-way transaction-cost rates from $0$ to $50$ bps, compared against the two strongest active baselines, LightGBM and DeepSeek V4 Pro.
Figure~\ref{fig:cost_sensitivity} yields two observations.
First, \model{} degrades no faster than the baselines despite comparable turnover: its return remains positive up to roughly $28$ bps, about twice the break-even cost of LightGBM and DeepSeek V4 Pro (both around $13$--$14$ bps).
Second, the advantage is not an artifact of the default cost setting: even when the cost rate is doubled from the default $5$ bps to $10$ bps, \model{} retains $+36.34\%$, which still exceeds the best baseline evaluated at zero cost ($+30.72\%$).
At extreme rates beyond $30$ bps, all active strategies fall below the passive NASDAQ benchmark, whose return is essentially cost-invariant; in this regime daily rebalancing itself, rather than any particular model, becomes unprofitable.

\end{document}